\newcommand{\equalcontrib}{\thanks{Equal contribution}}
\newcommand{\equalcontribmark}{\footnotemark[\value{footnote}]}
\theoremstyle{plain}
\newtheorem{theorem}{Theorem}[section]
\newtheorem{lemma}[theorem]{Lemma}
\theoremstyle{definition}
\newtheorem{definition}[theorem]{Definition}
\theoremstyle{remark}
\title{SALMAN: \underline{S}tability \underline{A}nalysis of  \underline{L}anguage Models\\Through the Maps Between Graph-based \underline{Man}ifolds}
\author{%
  Wuxinlin Cheng\equalcontrib \\
  Stevens Institute of Technology\\
  \texttt{chengwu1230@gmail.com} \\
  \And
  Yupeng Cao\equalcontribmark \\
  Stevens Institute of Technology\\
  \texttt{ycao33@stevens.edu} \\
  \And
  Jinwen Wu\\
  Stevens Institute of Technology\\
  \texttt{jwu74@stevens.edu} \\
  \And
  Koduvayur Subbalakshmi \\
  Stevens Institute of Technology\\
  \texttt{ksubbala@stevens.edu} \\
  \And
  Tian Han \\
  Stevens Institute of Technology\\
  \texttt{than6@stevens.edu} \\
  \And
  Zhuo Feng\\
  Stevens Institute of Technology\\
  \texttt{zfeng12@stevens.edu} \\
}
\begin{document}

\maketitle

\begin{abstract}
Recent strides in Pretrained Transformer-based language models have propelled state-of-the-art performance in numerous NLP tasks. Yet, as these models grow in size and deployment, their robustness under input perturbations becomes an increasingly urgent question. Existing robustness methods often diverge between small-parameter and large-scale models (LLMs), and they typically rely on labor-intensive, sample-specific adversarial designs. In this paper, we propose a unified, local (sample-level) robustness framework (SALMAN) that evaluates model stability without modifying internal parameters or resorting to complex perturbation heuristics. Central to our approach is a novel Distance Mapping Distortion (DMD) 
measure, which ranks each sample’s susceptibility by comparing input-to-output distance mappings in a near-linear complexity manner. By demonstrating significant gains in attack efficiency and robust training, we position our framework as a practical, model-agnostic tool for advancing the reliability of transformer-based NLP systems.
\end{abstract}

\section{Introduction}
%\than{maybe could also shrink a bit about the introduction, especifially the first few paragraphs that describe other methods. The section 2 Background could also be shrinked, especially the sec2.1. As we have lots of results/theorems/tables to put into the paper, those sentences/paragraphs that describe those methods can be relatively reduced. They are important to have as they can motivates our method, but maybe less important compared to our major innovation/methodology }
Recently, breakthroughs in pretrained Transformer-based language models have revolutionized the field of Natural Language Processing (NLP). These models have enhanced performance across a wide range of downstream NLP tasks, including text classification~\cite{sun2019fine}, summarization~\cite{el2021automatic}, chatbot~\cite{achiam2023gpt}, and complex reasoning~\cite{wei2022chain}. Given the widespread adoption of language models, it is crucial to evaluate their robustness. The robustness problems in the NLP community mostly focus on exploring the language model behavior when the inputs are modified.

%Initially, 
\citet{ebrahimi2017hotflip} design character-level and word-level perturbations as adversarial examples to attack NLP models. \citet{jia2017adversarial} explore the method %\suba {methods}\yc{done} 
to mislead the language model's output in the Q\&A task by adding random sentences. Later, research such as that by ~\citet{jin2020bert} and~\citet{li2020bert} focused on designing adversarial samples that better preserve the original semantics. Subsequently, some work began to analyze the robustness of language models in continuous space and improve the generalization ability of NLP models and defense against word substitution attacks through adversarial training in continuous space~\cite{zhufreelb, li2021searching, li2023perturbscore}. Recently, the growth of model parameter size and training data for language models has demonstrated that Large Language Models (LLMs) exhibit increased robustness to trivial disturbances and handle common disruptions more effectively~\cite{achiam2023gpt, zou2023universal}. As a result, recent studies have devised Jailbreak prompts specifically designed to attack LLMs, thereby evaluating and testing their robustness~\cite{wang2023decodingtrust, zhu2023promptrobust}.

While significant progress has been made in the robust evaluation of pre-trained language models, current robustness analyses still face several key challenges. First, the methods for evaluating robustness in large language models (e.g., LLaMA-series) and those in smaller models (e.g., BERT, BART) differ substantially. In smaller models, word-level or token-level adversarial attacks often suffice to expose vulnerabilities~\cite{li2018textbugger, li2020bert, garg2020bae}. However, large language models can often interpret or adapt to these simple perturbations, necessitating more carefully designed prompt-based strategies for effective robustness testing~\cite{zou2023universal, paulus2024advprompter}. As a result, a unified robustness evaluation framework applicable to both large and small models is currently lacking. Second, irrespective of a model’s parameter size, designing adversarial inputs or prompts remains a time-consuming and labor-intensive process, particularly for large-scale NLP datasets. These challenges highlight the need for automated, scalable, and more universal approaches to evaluate the robustness of diverse language models. %\yc{Refined this paragraph.}\suba{It would be worthwhile pointing out why you think having one method to analyze the robustness of LMs of different sizes is important. Presumably, the smaller models will be preferred if they can do the job and be robust, so it is ok if they are evaluated in a different way, as long as you trust that methodology?} 

In this work, we propose a sample-centric robustness framework that addresses both challenges by: 1) Unifying Robustness Evaluation for All Model Scales. 
Specifically, our method computes a local (per-sample) robustness measure—applicable to both smaller models (e.g., DistilBERT) and massive LLMs (e.g., GPT-2, Llama) \emph{without} requiring changes to their internal parameters or specialized adversarial training. 2) Minimizing Labor-Intensive Perturbation Design. Instead of heavily relying on constructing adversarial prompts, we quantify each input’s inherent vulnerability via a lightweight, near-linear complexity approach. This ranking then guides adversarial attacks or fine-tuning decisions, drastically reducing manual effort compared to purely sample-by-sample adversarial generation.

%\than{maybe also mention PGM and briefly say how you use it and its high level advantages. }\wc{maybe it's better to mention PGM after the background introduction?}\than{One concern is section 2.2 discuss the PGM, but we haven't talked about this PGM before this section. I think it might be better to briefly describe the PGM in the introduction because our title explicitly mention graph-based manifold, this graph is related to PGM, and we need such graph to define DMD. Even if we dont discuss PGM in the intro, then we might breifly say thing to warm up the idea, then it will become smooth to take about PGM in the background. }\wc{sure, todo}
At the core of our method is a novel, per-sample distance mapping distortion (DMD) metric that compares distances in the input representation space against distances in the output representation space. To facilitate these distance calculations efficiently, we first build a \emph{near-linear complexity} Probabilistic Graphical Model (PGM) that captures the manifold structure of the data, preserving both local geometry and global structural properties without resorting to dense or iterative global optimizations. 
%\suba{maybe give a high level understanding on why you think this is a valid way to measure robustness of samples. And tie it in to how this relates to robustness analysis for LLMs}\wc{done}
By assessing each instance individually, we gain a fine-grained view of where and how a model fails to preserve distances across its representations—leading directly to broader insights about the system’s behavior as a whole. Such per-sample analyses, in turn, form the building blocks of understanding overall stability~\cite{zhang2019theoretically}. Rather than relying on aggregate statistics alone, examining each sample’s distortion enables us to pinpoint particular modes of fragility.
We show how this ranking can: 1) Streamline NLP Attacks: Targeting non-robust samples first yields more efficient and more effective adversarial attacks (Section~\ref{subsec:guided-attack}). 2) Improve LLM robustness through Fine-Tuning: Up-weighting non-robust data during fine-tuning preserves or even improves generalization and yields internal representations closer to the pre-trained checkpoint (Section~\ref{subsec:guided-finetuning}). %\than{how does the descriptions above related to fine-tuning stability}.\wc{done} 
Furthermore, the same method applies to both smaller-scale models and large-scale LLMs, offering a unified pathway for robustness analysis across diverse parameter regimes.

Overall, our contributions are:
\begin{itemize}
    \item A unified robustness measure (SALMAN) that can be computed in nearly-linear time for language models of varied sizes (from smaller transformers to LLMs), without requiring specialized tasks, perturbed data, or parameter modifications.
    \item To our best knowledge, SALMAN is the first local (sample-level) robustness measurement specifically tailored from small to large language models%\than{maybe slightly revise the narrative. The first contribution say the measure works for both small and LLM, but here, we say the metric is specifically tailored to LLM}
    , enabling fine-grained analysis of how individual inputs withstand minor or adversarial perturbations. %\than{maybe add to our best knowledge somewhere in case that reviewer find some early work}
    %\item A scalable, near-linear procedure to compute per-sample vulnerability scores, minimizing the labor-intensive process of designing or generating adversarial samples for each input.\than{this point might be slightly redundant with the second point? Both point seem emphasize that we propose a sample-specific measurement}
    \item Empirical demonstrations across both small (BERT, DistilBERT) and large models (GPT-2, Llama) showing how this sample-level perspective leads to (i) more efficient and higher success-rate adversarial attacks, and (ii) improved robust fine-tuning outcomes.
\end{itemize}

\section{Background}

\subsection{Robustness in NLP}
\label{subsec:rb}
The robustness of language models remains a pivotal area of research within the NLP community. Several studies have explored the vulnerability of these models to modifications in the input text, ranging from typos to word replacements~\cite{li2020bert, jin2020bert, sun2020adv}. \citet{wang2021adversarial} further developed a multi-task benchmark to evaluate language model robustness. In the realm of model probing, \citet{tenney2019bert} and \citet{hewitt2019structural} examined how syntactic and semantic features are represented across different layers of BERT-like models. \citet{voita2019bottom} and \citet{abnar2019blackbox} employed similarity-based analysis methods to study the evolution of representations in deep neural networks. \citet{zhou2021closer} and \citet{neerudu2023robustness} performed a comprehensive analysis of how finetuning affects the representations in the language model using a combination of probing and analytical techniques. With the increase in model parameters, LLMs can distinguish between minor textual variations, underlining the need to explore their robustness to input perturbations.  Recent studies have focused on the impact of input prompts on LLM robustness~\cite{shayegani2023survey}. ~\citet{wang2023robustness} assessed the robustness of ChatGPT against adversarial and out-of-distribution samples. \citet{zou2023universal} enhanced the efficiency of jailbreak attacks by generating adversarial suffixes. DecodingTrust examined the robustness of LLMs using standard datasets like AdvGLUE and AdvGLUE++~\cite{wang2023decodingtrust}. PromptRobust investigated the robustness of LLMs from the perspective of prompts, demonstrating that subtle changes in instructions can lead to significant performance variations~\cite{zhu2023promptrobust}.

\subsection{Probabilistic Graphical Models}
\label{subsec:pgms}

%Probabilistic Graphical Models (PGMs) provide a powerful framework for representing conditional dependencies among a set of variables in a graph structure \cite{koller2009probabilistic}. In our context, each \emph{sample} (e.g., an embedding from a Transformer) corresponds to a node in the graph, with edges encoding meaningful interactions or similarities between those samples. By encapsulating joint distributions over high-dimensional data into graph-based factorizations, PGMs allow for both interpretability and efficient inference. Building a PGM also captures both local and global dependencies among samples~\cite{vu2020pgm, feng2021sgl}. Specifically, nodes that share strong proximity or structural similarity form edges that collectively approximate the intrinsic manifold of the data. Each node maintains a local neighborhood capturing conditional dependencies, and edges approximate the manifold’s connectivity structure~\cite{rubin2020manifold}. Consequently, we obtain a \emph{probabilistic} view of the data manifold: points forming tighter subgraphs indicate higher intrinsic similarity, while loosely connected regions indicate greater divergence. Recently, SAGMAN~\cite{cheng2024sagman} extend these PGM-based ideas to graph neural networks by combining dimension reduction to handle domain-specific manifold structures.

Probabilistic Graphical Models (PGMs) represent conditional dependencies among variables in a graph, enabling interpretability and efficient inference~\cite{koller2009probabilistic}. Here, each sample (e.g., a Transformer embedding) becomes a node, with edges capturing local/global interactions that approximate the data manifold~\cite{vu2020pgm, feng2021sgl, rubin2020manifold}. Tightly connected subgraphs indicate higher intrinsic similarity, while loosely connected regions suggest divergence. Recently, SAGMAN~\cite{cheng2024sagman} extends these PGM-based ideas to GNNs by incorporating dimension reduction for domain-specific manifold structures.

\section{Theoretical Foundations of SALMAN}

\begin{figure*}[ht]
    \centering
    \includegraphics[width=0.7\textwidth, trim={2.5cm 1.0cm 4.5cm 2.4cm}, clip]{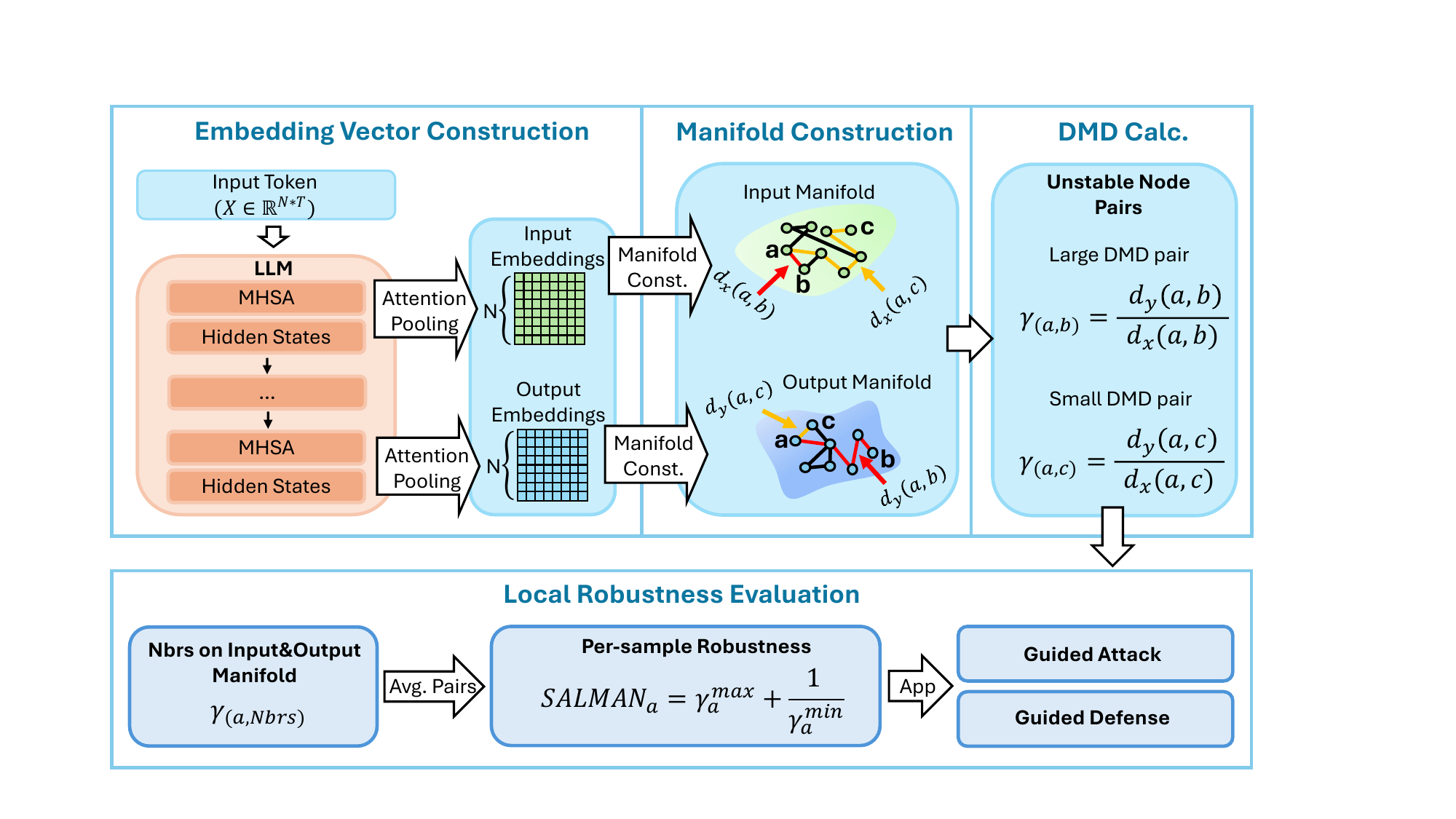}
    \vspace{-5pt}
    \caption{The overview of SALMAN Method.}
    % \vspace{-15pt}
\label{fig:flowchart}
\end{figure*}

In this section, we detail our overall pipeline (Figure~\ref{fig:flowchart}), followed by the mathematical underpinnings of (1) Embedding Construction (Sec.~\ref{subsec:dim-reduction}), (2) Manifold Construction (Sec.~\ref{subsec:manifold_construction}), (3) Distance Mapping Distortion (Sec.~\ref{subsec:dmd-calculation}), and (4) Algorithm Complexity Analysis (Sec.~\ref{subsec:algo-complexity}).

\subsection{Embedding Vector Construction}
\label{subsec:dim-reduction}
A key challenge in analyzing transformer models arises from the \emph{discrete} nature of token embeddings, which may not form well-behaved manifolds in the topological sense~\citep{robinson2024structure}. In particular, the geometry of the token space is heavily fragmented: a small textual perturbation (e.g., replacing a token with a synonym) can induce a disproportionately large jump in token-level embedding indices. As a result, continuous manifold-based analyses---which rely on smooth neighborhoods and gradual changes---become intractable when applied directly to discrete tokens. 
%\paragraph{Motivation for Continuous Embeddings.}
%Since many modern NLP models yield high-dimensional token embeddings~\cite{kenton2019bert,xu2023tensorgpt}, directly treating each token as a separate data point can further inflate dimensionality, complicating subsequent manifold or stability analyses. 
Moreover, transformers often exhibit \emph{stochastic decoding} (via temperature sampling, beam search, etc.)~\cite{li2024dynamic}, meaning identical input text can produce slightly different token outputs. Hence, relying purely on discrete token sequences introduces variability that disrupts stable manifold construction.

%each data sample $\mathbf{x}_i$ is tokenized embeddings
%$\{\mathbf{x}_{i,1}, \mathbf{x}_{i,2}, \dots, \mathbf{x}_{i,T_i}\}$ of length $T_i$. 
\paragraph{Attention Based Embedding Representation.}
To address these issues, we aggregate each sample’s token embeddings into a \emph{single} continuous vector, thereby avoiding the discontinuities of the raw token space. Formally, given a natural language dataset of $N$ samples, each data sample is tokenized into a sequence of embeddings $\{\mathbf{x}_1,\dots,\mathbf{x}_N\}$. We then pass each $\mathbf{x}_i$ through a Transformer-based pre-trained language model to obtain its Multi-Head Self-Attention (MHSA) \emph{outputs}, which we denote as $A_i 
  \;=\; \mathrm{MHSA}\bigl(\mathbf{x}_i\bigr) 
  \;\in\; \mathbb{R}^{H \times T_i \times d_{\mathrm{model}}}.$
%\[
%  A_i 
%  \;=\; \mathrm{MHSA}\bigl(\mathbf{x}_i\bigr) 
%  \;\in\; \mathbb{R}^{H \times T_i \times d_{\mathrm{model}}}.
%\]
Here, $H$ is the number of attention heads, $T_{i}$ is length of $\mathbf{x}_i$, and $d_{\mathrm{model}}$ is the hidden dimension. Then, we average these per-head outputs along the head dimension: $ \overline{A}_i 
  \;=\; \frac{1}{H} \sum_{h=1}^H A_{i,h}
  \;\in\; \mathbb{R}^{T_i \times d_{\mathrm{model}}}$
%\[
%  \overline{A}_i 
%  \;=\; \frac{1}{H} \sum_{h=1}^H A_{i,h}
%  \;\in\; \mathbb{R}^{T_i \times d_{\mathrm{model}}},
%\].
. Next, we \emph{compute an attention-based weighting}. The Softmax is applied to attention matrices $\overline{A}_i$ over the $T_i$ tokens to obtain $\boldsymbol{\alpha}_i\in\mathbb{R}^{T_i}$. %\yc{ Refined.}\than{where does this $\alpha$ come from? How do we compute it? Do we softmax on $\overline{A}_i$? }
%\[
%  \sum_{t=1}^{T_i} \boldsymbol{\alpha}_{i,t} \;=\; 1.
%\]
%Let $\boldsymbol{\alpha}_i\in\mathbb{R}^{T_i}$ denote the chosen 
%\emph{nonnegative} weights for $i-th$ token in input sequence, which 
%We normalize $\sum_{t=1}^{T_i} \boldsymbol{\alpha}_{i,t} \;=\; 1.$
Finally, we compute a single pooled vector 
$\mathbf{v}_i\in\mathbb{R}^{d_{\mathrm{model}}}$ for each sequence by a 
weighted sum of the token embeddings: $\mathbf{v}_i 
  \;=\; 
  \sum_{t=1}^{T_i} \boldsymbol{\alpha}_{i,t} 
  \;\overline{A}_{i,t}.$
%\[
%  \mathbf{v}_i 
%  \;=\; 
%  \sum_{t=1}^{T_i} \boldsymbol{\alpha}_{i,t} 
%  \;\overline{A}_{i,t}.
%\]
Collecting all $\mathbf{v}_i$ for $i=1,\dots,N$ then yields a new embedding matrix $\{\mathbf{v}_1,\dots,\mathbf{v}_N\}$. We denote the embedding matrix taken from the first layer in the language model as $\mathbf{z}_X$ and the embedding matrix taken from the last layer of the language model as $\mathbf{z}_Y$. 
%Following the same process, a new embedding output matrix $\mathbf{z}_Y$ will be yielded. 
%\than{what is $y$?}\wc{should We mention the MHSAs of input and output are from different LLM layers?}

% \textbf{Stop At Here.}
% \begin{equation}
%     \mathbf{z}_X \;=\; \mathrm{Pool}\bigl(\mathbf{x}_1,\dots,\mathbf{x}_n\bigr),
%     \label{eq:pool_input}
% \end{equation}
% where $\mathrm{Pool}(\cdot)$ is the output of Multi-Head Self-Attention (MHSA) from LLM. This yields a linguistically meaningful input embedding $\mathbf{z}_X$~\cite{clark2019does} . Likewise, we pool the final-layer hidden states to obtain a single \emph{output embedding} $\mathbf{z}_Y$:
% \begin{equation}
%     \mathbf{z}_Y \;=\; \mathrm{Pool}\bigl(\mathbf{h}_1,\dots,\mathbf{h}_m\bigr),
%     \label{eq:pool_output}
% \end{equation}
% where $\{\mathbf{h}_1,\dots,\mathbf{h}_m\}$ are the final hidden states prior to the softmax. %In this work, we found that \textbf{(i)} average-pooling and \textbf{(ii)} mean-plus-attention pooling each preserve critical semantic information while reducing dimensionality.

\paragraph{Deterministic Hidden-State Embeddings.}
Though transformers can produce stochastic token outputs, the internal hidden states remain largely deterministic \emph{if we freeze the model parameters and inference procedure}. For instance, by disabling dropout layers and using a fixed random seed, we empirically found that $\mathbf{z}_Y$ becomes stable irrespective of minor token-level variations. %In our preliminary experiments%\than{better change the words. Preliminary experiments sounds like the work is incomplete and not solid yet. Maybe start with specifically or to be more precise}, 
Specifically, we measured multiple $\mathbf{z}_Y$ across the same input and observed a significant similarity improvement %\than{improvement in what?} 
compared token embeddings (As detailed in the Appendix~\ref{sec:appendix_deterministic}). %This indicates that even under mild stochastic decoding, the pooled embeddings stay consistent enough for our manifold-based analyses.

% \paragraph{Connection to Manifold Construction.}
% We employ the continuous embeddings $\{\mathbf{z}_X, \mathbf{z}_Y\}$ to build manifolds in the subsequent phases of our pipeline. Specifically:
% \begin{itemize}
%     \item \textbf{Section~\ref{sec:manifold}} constructs a graph-based manifold over these embedding vectors, preserving \emph{effective resistance} distances between samples.
%     \item \textbf{Section~\ref{sec:DMD}} defines a \emph{Distance Mapping Distortion} metric on these manifolds, quantifying how robustly the model maps small perturbations in $\mathbf{z}_X$ to changes in $\mathbf{z}_Y$.
% \end{itemize}
% Thus, the final manifold analysis—critical for our stability and adversarial robustness studies—rests on having a stable, continuous representation of each sequence.

By aggregating discrete token embeddings into a single high-level embedding, we circumvent the discontinuities of token-level spaces. We thereby ensure that (1) manifold analysis is tractable (since $\mathbf{z}_X, \mathbf{z}_Y$ both lie in continuous $\mathbb{R}^d$~\cite{mehta2019define}), and (2) stochastic decoding does not cause major geometric shifts in these embeddings. This design choice, while straightforward, underpins all subsequent sections on manifold construction and robustness evaluation.

\subsection{Construction of Manifolds via PGM}
\label{subsec:manifold_construction}
%\than{this section might be benefit from re-organizing or revision. The logic is not quite clear (especially the motivation and presentation) to me when I first read it. a).If we want to present the whole procedure like in fig1, we should present the graph construction first, then mention its limitation, and then naturally motivates the edge sparsification. But its unclear how to construct the initial graph (maybe using k-nn?) the paragraph directly jump in to describe the edge pruning techniques. b) its unclear to me why we begin with the Eqn 1. If Eqn 1 is only linked to the sparsification technique, why not we present the pruning first, and then states that such prunning theorectically could maximize the F($\Theta$), then provide some insights/explaination about this $F(\Theta)$. For example, because such sparsification max the F($\Theta$), so the xx term will be maximized, so the pruning may achieve xxx desired properties etc. If we present the Eqn 1 in the current format, then people would expect you explicitly describe some update procedure of learning $\Theta$, even if you link the Eqn 1 to edge prunning, the audience maybe curious why we need to solve (1).}\wc{done}

%\wc{todo: logical flow: need manifold for stability analysis, direct constrict manifold with high dimension data can be hard, so we use PGMs}

Understanding a language model’s local robustness involves assessing how small input perturbations influence the model’s output representation. A common strategy is to interpret this input--output mapping as a \emph{manifold}, enabling geometric analyses of local stability~\citep{rubin2020manifold}. However, directly constructing and maintaining such a manifold on raw embeddings can be both computationally and memory intensive. Recent work indicates that \emph{graph-based} approaches can capture low-dimensional manifolds within high-dimensional data~\citep{rubin2020manifold}, \emph{especially} when the graph is constructed to preserve meaningful distances. PGMs (or Markov Random Fields) naturally encode these relationships in an undirected graphical structure, allowing for efficient inference about node neighborhoods and global structure \citep{koller2009probabilistic}. Specifically, \citet{feng2021sgl} show that the graph structure learned by PGMs can approximate \emph{resistance distances}, which in turn correlate with Euclidean distances among data samples. Hence, a properly built PGM manifold can reflect \emph{both local and global} geometry—critical for analyzing small perturbations (local stability) and broader connectivity (global structure).  

Despite their theoretical appeal, existing PGM-based methods often rely on iterative optimization or dense computations (e.g., spectral factorization) that become prohibitive for large-scale graphs~\citep{feng2021sgl}. When handling modern NLP datasets—where each sample might represent a document or prompt, and node counts can soar into the hundreds of thousands—these bottlenecks make traditional PGM approaches infeasible.
To address these scalability concerns, we propose a \emph{near-linear complexity} method for building the PGM manifold. Intuitively, we seek a graph Laplacian structure (or precision matrix) that captures the intrinsic geometry of the reduced embeddings (Sec.~\ref{subsec:dim-reduction}) without incurring expensive global factorization steps. 
Recent work \citep{dong2019learning} shows that maximizing a penalized log-likelihood in the form of Eq.~\eqref{opt2} yields a graph topology consistent with the underlying data distribution while preserving essential distance or similarity properties.

% \vspace{1em}
\noindent
\textbf{PGM Objective.}
Let $X \in \mathbb{R}^{|V|\times T}$ be the embedding matrix %\than{what is $k$?}
derived from Sec.~\ref{subsec:dim-reduction}, where each row corresponds to a sample. %\wc{done}\than{maybe add motivation here to smoothly transit to Eqn 1. The general audience maybe unclear about Eqn 1 and why we use it}
We aim to learn a precision matrix $\Theta$ that maximizes~\cite{dong2019learning}:
\begin{equation}\label{opt2}
\max_{\Theta} \quad F(\Theta) 
= \log\det(\Theta) \;-\; \frac{1}{k} \,\operatorname{Tr}\bigl(X^\top \Theta X\bigr),
\end{equation}
subject to $\Theta = \mathcal{L} + \tfrac{1}{\sigma^2}I$, where $\mathcal{L}$ is a valid Laplacian matrix and $\sigma^2$ is a prior variance term. Theorem~\ref{pruning strategy} shows that \emph{maximizing} $F(\Theta)$ can be achieved %in nearly-linear time 
using a spectral sparsification approach, which prunes edges with small distance ratios:
\[
\rho_{p,q} 
= \frac{d^{\mathrm{eff}}(p, q)}{d^{\mathrm{dat}}(p, q)}
= w_{p,q}\,\bigl(d^{\mathrm{eff}}(p, q)\bigr),
\]
where $d^{\mathrm{eff}}(p, q)$ is the effective resistance distance (detailed in Appendix~\ref{sec:appendix_eff_resistance}) between nodes $p$ and $q$%\than{what is effective resistance distance?}
, $d^{\mathrm{dat}}(p, q) = \| X_p - X_q \|_2^2$ is the data distance, and $w_{p,q} = 1/d^{\mathrm{dat}}(p, q)$.

\begin{theorem}
\label{pruning strategy}
Maximizing the objective in Equation~\eqref{opt2} can be done %in nearly-linear time 
via an edge pruning strategy equivalent to spectral sparsification of the initial (dense) graph. Edges with small $\rho_{p,q}$ are removed, preserving the essential spectral (and thus resistance) properties of the original graph. The proof is available in Appendix~\ref{subsec:pruning_sparsification}
\end{theorem}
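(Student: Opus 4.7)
The plan is to analyze how $F(\Theta)$ changes under rank-one, edge-level perturbations of the Laplacian, derive a first-order optimality condition expressed in terms of $\rho_{p,q}$, and then identify that condition with the effective-resistance criterion that underlies Spielman--Srivastava spectral sparsification.

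First I would parametrize $\Theta = \sum_{(p,q)} w_{p,q}\, b_{pq} b_{pq}^\top + \tfrac{1}{\sigma^2} I$ where $b_{pq} = e_p - e_q$, so that each candidate edge contributes a rank-one update to the precision matrix. Applying the matrix determinant lemma gives
\begin{equation*}
\log\det\bigl(\Theta + w\, b_{pq} b_{pq}^\top\bigr) - \log\det(\Theta) = \log\bigl(1 + w\, b_{pq}^\top \Theta^{-1} b_{pq}\bigr) = \log\bigl(1 + w\, d^{\mathrm{eff}}(p,q)\bigr),
\end{equation*}
using the standard identification of $b_{pq}^\top \Theta^{-1} b_{pq}$ with the effective resistance (Appendix~\ref{sec:appendix_eff_resistance}). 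A direct expansion shows that the trace term contributes $(w/k)\, b_{pq}^\top X X^\top b_{pq} = (w/k)\, d^{\mathrm{dat}}(p,q)$. Hence the marginal contribution of edge $(p,q)$ to $F$ is $\log\bigl(1 + w\, d^{\mathrm{eff}}(p,q)\bigr) - (w/k)\, d^{\mathrm{dat}}(p,q)$.

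Next I would differentiate in $w$ and impose stationarity, yielding the closed-form optimum $w^\star_{p,q} = k/d^{\mathrm{dat}}(p,q) - 1/d^{\mathrm{eff}}(p,q)$, which is strictly positive exactly when $k\,\rho_{p,q} > 1$ and is otherwise clipped to zero by the nonnegativity constraint $w_{p,q} \ge 0$ inherent to any valid weighted Laplacian. Thus the KKT conditions on the positive orthant implement a hard pruning rule thresholded in $\rho_{p,q}$: small-ratio edges contribute strictly negative marginal value to $F$ and must be removed at any local optimum. Finally, I would invoke the Spielman--Srivastava framework to close the equivalence with spectral sparsification: the quantity $\rho_{p,q} = w_{p,q}\, d^{\mathrm{eff}}(p,q)$ is precisely the statistical leverage score of edge $(p,q)$ in the Laplacian; retaining edges above the threshold is the deterministic analogue of leverage-score sampling, which preserves the Laplacian spectrum (and hence resistance distances) up to a multiplicative factor. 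This places the pruned graph and the original on the same spectral manifold.

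The main obstacle will be step two: the objective is not jointly concave in $\{w_{p,q}\}$ because $d^{\mathrm{eff}}(p,q)$ couples through $\Theta^{-1}$ to every other edge, so a naive edgewise argument is only locally valid. Making the pruning rule globally rigorous will require either a block-coordinate-descent argument that iterates the thresholding until $d^{\mathrm{eff}}$ stabilizes, or an appeal to the Spielman--Srivastava $(1\pm\varepsilon)$ preservation guarantee to bound the drift of $d^{\mathrm{eff}}$ after sparsification and show the threshold condition is self-consistent. The remaining steps (matrix determinant lemma, trace expansion, KKT thresholding) are essentially mechanical once the edge-decomposition is set up.
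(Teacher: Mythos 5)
Your proposal follows the same essential route as the paper's proof in Appendix~\ref{subsec:pruning_sparsification}: a per-edge first-order analysis of $F(\Theta)$ in which the $\log\det$ term contributes the effective resistance $d^{\mathrm{eff}}(p,q)$ and the trace term contributes the data distance $d^{\mathrm{dat}}(p,q)=1/w_{p,q}$, so that the sign of the marginal contribution of edge $(p,q)$ is governed by the ratio $\rho_{p,q}=w_{p,q}\,d^{\mathrm{eff}}(p,q)$ against the threshold $1/k$. Where you differ is in rigor rather than strategy, and your version is tighter on three points. First, the paper obtains $\partial F_1/\partial w_{p,q}\approx d^{\mathrm{eff}}(p,q)$ by an appeal to ``standard matrix calculus,'' whereas your matrix-determinant-lemma computation gives the exact finite increment $\log\bigl(1+w\,b_{pq}^\top\Theta^{-1}b_{pq}\bigr)$ (the only approximation being that $\Theta^{-1}$ rather than $\mathcal{L}^+$ defines the resistance, due to the $\sigma^{-2}I$ regularizer). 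Second, the paper stops at the sign of the gradient and asserts that small-$\rho$ edges should be pruned; your KKT argument on the nonnegativity constraint, together with the closed-form stationary weight $w^\star_{p,q}$, actually turns that observation into a well-posed thresholding rule (and your positivity condition $k\rho_{p,q}>1$ agrees with the paper's gradient-sign condition evaluated at $w_{p,q}=1/d^{\mathrm{dat}}(p,q)$). Third, the paper's claim that pruning ``preserves the essential spectral properties'' is simply asserted; your identification of $\rho_{p,q}$ with the edge leverage score and the appeal to the Spielman--Srivastava guarantee is the missing justification for the word ``equivalent to spectral sparsification'' in the theorem statement (the paper defers this burden to Theorem~\ref{LRD} and its appendix rather than discharging it here).

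The obstacle you flag at the end is real and is present, unacknowledged, in the paper's proof as well: because $d^{\mathrm{eff}}(p,q)$ is computed from the full $\Theta$, the edgewise stationarity conditions are coupled, $F$ is not separable (nor concave) in $\{w_{p,q}\}$, and a single pass of thresholding is only a local, first-order argument. Your proposed fixes---iterating the threshold until the resistances stabilize, or using the $(1\pm\varepsilon)$ spectral guarantee to bound the post-pruning drift of $d^{\mathrm{eff}}$ and verify self-consistency of the threshold---are exactly what would be needed to make the claim rigorous, and neither appears in the paper. So your proposal is not merely consistent with the paper's proof; it is a strictly more careful version of it.
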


% \vspace{1em}
\noindent
% \textbf{Initial Graph Construction via \textit{k}-NN.} 
% To apply Theorem~\ref{pruning strategy}, we first build an initial (potentially dense) graph of $|V|$ nodes. For large NLP datasets, a practical approach is to connect each node to its \textit{k} nearest neighbors (in the embedding space from Sec.~\ref{subsec:dim-reduction}), yielding an approximate or exact $k$-NN graph. %This step, while not strictly necessary, often provides a more efficient starting point than a fully dense graph (e.g., every pair of nodes connected).

% \vspace{1em}
\noindent
\textbf{Scalable Spectral Sparsification via Short-Cycle Decomposition.}
A naive implementation of the above pruning requires frequent effective-resistance computations~\cite{spielman2008graph}, which is costly for weighted graphs. Methods such as short-cycle decomposition~\cite{chu2020graph} are effective for \emph{unweighted} graphs but fail to retain accurate resistance distances when weights are discarded. We therefore introduce a refined \emph{spectral sparsification} routine that uses low-resistance-diameter (LRD) decomposition %\than{maybe use a few sentences to briefly describe the LRD? The audience maybe less familiar with such decomposition}\wc{done}
to handle weighted edges without sacrificing the crucial resistance distance information. %In essence, an LRD decomposition partitions a weighted graph into subgraphs of bounded “electrical diameter,” ensuring that each subgraph exhibits small effective resistances internally~\cite{aghdaei2024ingrass}. 
% In an LRD decomposition, the graph is partitioned into multiple small clusters, each with a \emph{bounded effective-resistance diameter}.
\begin{theorem}
\label{LRD}
Our LRD decomposition can efficiently compute the effective resistance of each edge and is effective for sparsifying weighted graphs. The proof is available in Appendix~\ref{sec:proof_LRD_efficiency}
\end{theorem}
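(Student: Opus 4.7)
The plan is to split Theorem~\ref{LRD} into its two constituent assertions and treat them independently: (i) the LRD decomposition yields efficient per-edge estimates of effective resistance, and (ii) sampling guided by these estimates produces a spectral sparsifier that respects edge weights. Throughout, I would model the weighted graph $G=(V,E,w)$ electrically, so that the conductance of edge $(p,q)$ is $w_{p,q}$ and $d^{\mathrm{eff}}(p,q)$ is the effective resistance in the associated circuit, thereby ensuring that all weight information enters the analysis through a single, well-behaved metric.

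For claim (i), I would first formalize the LRD partition: a randomized ball-growing procedure that uses $d^{\mathrm{eff}}$ (rather than hop distance) as the metric carves $V$ into clusters $\{V_1,\dots,V_k\}$ with in-cluster resistance diameter at most $R$, while bounding the expected fraction of inter-cluster edges. Within each cluster, Rayleigh monotonicity ensures $d^{\mathrm{eff}}(p,q)\le R$ for every intra-cluster edge $(p,q)$, so its effective resistance can be read off from a localized Laplacian solve restricted to $V_i$ together with a thin boundary layer, executed with a near-linear-time SDD solver. The weighted setting is handled naturally because effective resistance is defined relative to the conductances $w_{p,q}$, so no weight information is discarded in the metric used for decomposition, in contrast to short-cycle decomposition which collapses weights to unity.

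For claim (ii), I would invoke a Spielman--Srivastava-style argument: sampling each edge with probability proportional to $w_{p,q}\,d^{\mathrm{eff}}(p,q)=\rho_{p,q}$ and reweighting accordingly preserves the Laplacian quadratic form up to a $(1\pm\varepsilon)$ factor with high probability, provided the resistance estimates are accurate to within a constant factor. Combining the localized estimates from claim (i) with a matrix-Chernoff tail bound yields a sparsifier whose spectrum approximates the original, and hence whose pruning ratios $\rho_{p,q}$ remain consistent with the rule invoked in Theorem~\ref{pruning strategy}. This preserves precisely the edges that are structurally important in the weighted graph, completing the sparsification guarantee.

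The main obstacle I anticipate is controlling the error on inter-cluster edges, where the local-solve trick breaks down and a naive global computation would violate the near-linear runtime goal. I expect to resolve this by showing, via the LRD decomposition guarantee, that only a small (roughly $\widetilde{O}(|E|/R)$) fraction of edges cross clusters, and then either applying LRD recursively at a coarser scale to these crossing edges or handling them with a constant number of global Johnson--Lindenstrauss sketches in the spirit of Spielman--Srivastava. Balancing the recursion depth against the accumulated spectral error, so that the total work remains near-linear while the final sparsifier is still a $(1\pm\varepsilon)$-approximation, is the step I expect to be the most delicate.
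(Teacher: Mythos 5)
Your plan is sound and would establish the (informal) claim, but it is not the route the paper takes, so it is worth contrasting the two. The paper's proof has two stages: first, it approximates every edge's effective resistance \emph{globally} by projecting $e_{p,q}$ onto an $m=\widetilde{O}(\log N)$-dimensional Krylov subspace $\mathrm{span}\{c, Ac, \dots, A^{m-1}c\}$ built from the adjacency matrix and a random vector, replacing the eigenvector expansion of $d^{\mathrm{eff}}(p,q)$ with a sum over the orthonormalized Krylov basis (Lemma~\ref{lemma:krylov}); second, it runs a \emph{multilevel contraction} in the spirit of short-cycle decomposition, where at each level edges with $d^{\mathrm{eff}}$ below a threshold are contracted into supernodes (with accumulated weights) and high-resistance inter-cluster edges are retained or re-weighted, after which the $(1\pm\varepsilon)$ guarantee is asserted by appeal to standard spectral-sparsification arguments. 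You instead propose a ball-growing low-resistance-diameter \emph{partition} with localized SDD solves inside clusters, followed by Spielman--Srivastava leverage-score \emph{sampling} with a matrix-Chernoff bound. The substantive differences are: (a) your resistance estimation is local-plus-JL rather than global Krylov projection; (b) your sparsification is randomized importance sampling with reweighting, whereas the paper's is deterministic thresholding combined with contraction (closer in spirit to Theorem~\ref{pruning strategy}'s pruning rule than to leverage-score sampling). Your route buys a fully rigorous, off-the-shelf $(1\pm\varepsilon)$ guarantee---note that local solves only \emph{overestimate} $d^{\mathrm{eff}}$ by Rayleigh monotonicity, which is harmless for sampling since it merely inflates the edge count by a constant factor---at the cost of randomization and the inter-cluster bookkeeping you flag; the paper's route keeps the hierarchy explicit and deterministic but leaves the error analysis of the Krylov estimates and of contraction-based sparsification at the level of a sketch.
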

%Empirically, we observe that this LRD-based approach more faithfully preserves the manifold structure compared to simple unweighted techniques.

% \vspace{1em}
\noindent
\textbf{From PGM to Manifold.}
With the pruned graph (and correspondingly updated Laplacian $\mathcal{L}$), solving Equation~\eqref{opt2} yields a precision matrix $\Theta$ that encodes the desired topological relationships in $X$. This PGM thus underpins our low-dimensional manifold, accurately maintaining resistance distances for subsequent stability analyses (detailed in Appendix~\ref{sec:appendix_pgm_manifold}). In practice, we initialize the graph with a $k$-nearest-neighbor construction and then apply our near-linear spectral sparsification (as detailed in Section~\ref{subsec:algo-complexity}) %\wc{done}\than{why its near-linear? Did we eblaborate such properties in previous paragraph?}
to achieve scalability. The resulting \emph{manifold} reflects both local and global structures, enabling the DMD calculation.

\subsection{Distance Mapping Distortion (DMD) Calculation}
\label{subsec:dmd-calculation}

%\wc{todo:additional experiment undergoing}\than{we might want to highlight our main innovation here. For example, does it come from DMD? or effective-resistance distance or per-node DMD? We might emphasize our new things. }\wc{sure, todo: highlight our main innovation}
Having constructed the input and output manifolds (Section~\ref{subsec:manifold_construction}), we now introduce the DMD metric~\cite{cheng2021spade} to quantify a model’s robustness at the \emph{sample level}. %Our analysis relies on both $\lambda_{\max}(L_Y^+L_X)$ and $\lambda_{\max}(L_X^+L_Y)$ to capture how local neighborhoods in the input manifold ($G_X$) map to the output manifold ($G_Y$), and vice versa.

\begin{definition}[Distance Mapping Distortion (DMD)]
\label{def:dmd}
Let $F$ be a function mapping an input manifold $G_X=(V,E_X)$ to an output manifold $G_Y=(V,E_Y)$, with $d_X(p,q)$ and $d_Y(p,q)$ denoting the distances between nodes $p$ and $q$ in $G_X$ and $G_Y$, respectively. The distance mapping distortion for $(p,q)$ through $F$ is
\begin{equation}\label{formula_dmd_app}
\gamma^F(p,q) \;\;=\;\; \frac{d_Y(p,q)}{d_X(p,q)}.
\end{equation}
\end{definition}

% DMD captures how “far apart” two nodes in $G_X$ become in $G_Y$. When $d_X(p,q)$ is small (i.e., $p$ and $q$ are close in the input), a large ratio $\gamma^F(p,q)$ indicates significant distortion in the output manifold—often a hallmark of \emph{reduced robustness}. %Conversely, smaller ratios suggest stability under small perturbations in the input space.
%If there exists a pair $(p,q)$ with $d_X(p,q)\!\to\!0$ but $d_Y(p,q)\!\gg\!0$, then arbitrarily small input changes can induce disproportionately large output changes. 
\paragraph{Innovation Highlight:} We show that not only is $\gamma^F_{max} = \max_{p,q} \gamma^F(p,q)$ informative for worst-case local expansion, but also $(\gamma^F_{\min})^{-1} = \bigl(\min_{p,q}\gamma^F(p,q)\bigr)^{-1}$ captures how the model might “collapse” distant inputs into overly similar outputs. We prove in Theorem~\ref{thm:gammaFmin_bound} (below) that large $(\gamma^F_{\min})^{-1}$ implies \emph{another dimension} of poor robustness—distinct from $\gamma^F_{\max}$ (Empirical results are available in Appendix~\ref{sec:appendix_bilip_empirical}).  Hence, both extremes of the distortion spectrum are necessary for a full local analysis.

% \paragraph{Motivating Example.}
% If there exists a pair $(p,q)$ with $d_X(p,q)\to 0$ yet $d_Y(p,q)\gg 0$, then arbitrarily small input changes induce a large output shift ($\gamma^F(p,q)\to\infty$). Conversely, if $d_Y(p,q)\ll d_X(p,q)$ for some $(p,q)$ that are far in $G_X$, the model may “collapse” distinct inputs into almost the same output region, introducing another form of fragility.

\textbf{Effective-resistance distance.}  
To make $\gamma^F$ computationally tractable, we replace geodesic distances with \emph{effective-resistance} ($d^\textit{eff}$). %For a Laplacian $L_G$ of a connected graph $G$, the resistance distance between nodes $p$ and $q$ is given by
%\begin{equation}\label{eq:eff_resist}
%d^\textit{eff}(p,q) \;=\; e_{p,q}^\top L_G^+\, e_{p,q},
%\end{equation}
%where $L_G^+$ is the Moore–Penrose pseudoinverse of $L_G$ and $e_{p,q} = e_p - e_q$ (standard basis vectors).
% \begin{lemma}[Geodesic vs.\ Resistance Distance \cite{chandra1996electrical}]
% \label{lemma:geo_vs_res}
% For any two nodes $p$ and $q$ in a tree, $d^\textit{eff}(p,q)$ matches the geodesic distance $d^\textit{geo}(p,q)$. In more complex (cyclic) graphs, $d^\textit{eff}(p,q)$ is always upper-bounded by $d^\textit{geo}(p,q)$.
% \end{lemma}
$d^\textit{eff}(p,q)$ is always matched or upper-bounded by $d^\textit{geo}(p,q)$~\cite{chandra1996electrical}. 
%\than{for some lemma, if we only list them for complete, but never use it in the later sections. its also possible to directly state the main conclusion in the text (followed by a citation). We listed lots of theorem and lemma in the paper, but some of them might only have loose connection to our main story? We list the theorem/lemma only if they are highly relevant to our approach and form a complete story. Too many theorems might also confuse the audience, and blur our main focus}
Thus, $d^\textit{eff}$ is an \emph{efficient} surrogate for $d^\textit{geo}$, especially when leveraging fast Laplacian solvers \cite{koutis2010approaching,kyng2016approximate}. We then define
\begin{equation}\label{eq:dmd_eff}
\gamma^F \;= \frac{d^\textit{eff}_Y(p,q)}{d^\textit{eff}_X(p,q)} 
\;=\;\frac{\,e_{p,q}^\top L_Y^+\, e_{p,q}\,}{\,e_{p,q}^\top L_X^+\, e_{p,q}\,},
\end{equation}
where $L_X$ and $L_Y$ denote the Laplacians of $G_X$ and $G_Y$, respectively.
Computing $\gamma^F_{max}$ or $\gamma^F_{min}$ \emph{exactly} via Equation~\eqref{eq:dmd_eff} can still be expensive for large graphs, since it involves considering all node pairs $(p,q)$. To alleviate this, \citet{cheng2021spade} proposed a \emph{spectral} upper bound on $\gamma^F_{max}$, termed the $\lambda_{\max}\bigl(L_Y^+\,L_X\bigr)$. Hence, a larger $\lambda_{\max}(L_Y^+L_X)$ suggests a larger distortion ratio and thus poorer robustness. This is also the upper bound of the best Lipschitz constant under the manifold setting~\cite{cheng2021spade}. For $\gamma^F_{min}$ lower bound calculation, we have:

\begin{theorem}\label{thm:gammaFmin_bound}
The minimum distance mapping distortion \(\gamma^F_{min}\) satisfies
\[
\gamma^F_{min} 
\;\;\ge\;\; \frac{1}{\,\lambda_{\max}\bigl(L_X^+\,L_Y\bigr)}.
\]
The proof is available in Appendix~\ref{proof_t1_min}
\end{theorem}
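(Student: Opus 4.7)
The plan is to reduce this lower bound to the SPADE-style spectral upper bound on worst-case distortion (quoted immediately before the theorem, from~\cite{cheng2021spade}) via a simple min--max duality, applied with the roles of the input and output Laplacians swapped.

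First, using the effective-resistance form in Equation~\eqref{eq:dmd_eff} I would rewrite the minimum as the reciprocal of a maximum:
\[
\gamma^F_{min} \;=\; \min_{p,q}\frac{e_{p,q}^\top L_Y^+ e_{p,q}}{e_{p,q}^\top L_X^+ e_{p,q}} \;=\; \left(\max_{p,q}\frac{e_{p,q}^\top L_X^+ e_{p,q}}{e_{p,q}^\top L_Y^+ e_{p,q}}\right)^{-1}.
\]
So it suffices to prove $\max_{p,q}\tfrac{e_{p,q}^\top L_X^+ e_{p,q}}{e_{p,q}^\top L_Y^+ e_{p,q}} \le \lambda_{\max}(L_X^+ L_Y)$, which is exactly the existing worst-case bound after exchanging the roles of $L_X$ and $L_Y$ in the SPADE argument.

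Second, I would supply this swapped bound with a generalized Rayleigh-quotient argument. Writing $e_{p,q} = L_Y^{1/2} u$ (valid on the orthogonal complement of the Laplacian kernel because $\mathbf{1}^\top e_{p,q} = 0$) converts the pair-wise ratio into $(u^\top L_Y^{1/2} L_X^+ L_Y^{1/2} u)/(u^\top u)$, whose supremum over admissible $u$ equals $\lambda_{\max}(L_Y^{1/2} L_X^+ L_Y^{1/2})$. Since $AB$ and $BA$ share their nonzero spectra (take $A=L_Y^{1/2}$ and $B=L_X^+ L_Y^{1/2}$), this quantity coincides with $\lambda_{\max}(L_X^+ L_Y)$. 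Combining with the reciprocal identity of the first step yields $\gamma^F_{min} \ge 1/\lambda_{\max}(L_X^+ L_Y)$.

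The main obstacle I foresee is purely book-keeping around the kernels of $L_X$ and $L_Y$: both Laplacians annihilate the all-ones vector, so the pseudoinverses, their symmetric square roots, and the $AB$--$BA$ similarity step must all be interpreted on the orthogonal complement of $\mathrm{span}(\mathbf{1})$. One has to verify that $e_{p,q} = e_p - e_q$ indeed lies in that complement (immediate) and that both PGM manifolds are connected so that their null spaces coincide (standard for the $k$-NN + sparsification construction of Section~\ref{subsec:manifold_construction}). Beyond this routine care, the theorem follows immediately from the reciprocal/swap duality and the standard generalized-eigenvalue bound, so no new machinery is required.
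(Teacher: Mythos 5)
Your proposal is correct and follows essentially the same route as the paper's proof in Appendix~\ref{proof_t1_min}: both relax the pairwise minimum over $e_{p,q}$ to a variational problem over all $v$ with $v^\top\mathbf{1}=0$ and then invoke the generalized Courant--Fischer/Rayleigh-quotient characterization, the only difference being that you take the reciprocal first and bound $\max(e_{p,q}^\top L_X^+ e_{p,q}/e_{p,q}^\top L_Y^+ e_{p,q})$ by $\lambda_{\max}(L_X^+L_Y)$ directly (with an explicit square-root and $AB$--$BA$ similarity computation), whereas the paper bounds the minimum by $\lambda_{\min}(L_Y^+L_X)$ and then uses the inverse identity $(L_Y^+L_X)^{-1}=L_X^+L_Y$ on the subspace. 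Your kernel book-keeping caveats match the paper's restriction to the subspace orthogonal to $\mathbf{1}$, so no gap remains.
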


% While $\lambda_{\max}(L_Y^+L_X)$ captures how small \emph{input} changes can balloon in the \emph{output}, the term $\lambda_{\max}(L_X^+L_Y)$ captures the inverse mapping: it identifies whether two \emph{close} points in the output manifold could have been \emph{distant} in the input. In \emph{sample-level} analysis, both can be informative: if $\lambda_{\max}(L_X^+L_Y)$ is also large, it may indicate a different type of manifold distortion (i.e., merging of distinct input regions in the output space).

\paragraph{SALMAN Score.}
We can extend the concept of DMD from node pairs to individual nodes. For each node (sample) $p$, we define $\textbf{SALMAN}^F(p)$:
\begin{equation}
%\textbf{SALMAN}^F(p)=
\frac{1}%{|\mathcal{N}_X(p)|}\sum_{q\in\mathcal{N}_X(p)}
{|\mathcal{N}(p)|}\sum_{q\in\mathcal{N}(p)}
(\gamma^F(p,q)^3+\gamma^F(p,q)^{-3}),
\end{equation}
where $\mathcal{N}_X(p)$ is the set of neighbors of $p$ in $G_X$ and $G_Y$.  A node with a larger $\textbf{SALMAN}^F(p)$ is considered more \emph{fragile}, since its local pairs $(p,q)$ exhibit greater distortion in “expansion” ($\gamma^F(p,q)$) or “collapse” ($1/\gamma^F(p,q)$) senses. To connect expansions and collapses more explicitly, let $\{\lambda_i\}_{i=1}^r$ be the $r$ largest eigenvalues of $L_Y^+L_X$ with corresponding eigenvectors $\{v_i\}_{i=1}^r$, 
and let $\{\mu_i\}_{i=1}^r$ be the $r$ largest eigenvalues of $L_X^+L_Y$ with corresponding eigenvectors $\{w_i\}_{i=1}^r$.
We define the weighted eigensubspace matrices: $V_r=\bigl[v_1\sqrt{\lambda_1},~\dots,~v_r\sqrt{\lambda_r}\bigr]$, $W_r=\bigl[w_1\sqrt{\mu_1},~\dots,~w_r\sqrt{\mu_r}\bigr]$. For each pair $(p,q)$, one has:

\begin{theorem}\label{thm:DMD_relationship}
%We can utilize \texorpdfstring{$L_Y^+L_X$}{Lg} with \texorpdfstring{$L_X^+L_Y$}{Lg} to calculate $\textbf{SALMAN}^F(p)$
% \[
% \lambda_{\max}(L_Y^+L_X) + \lambda_{\max}(L_X^+L_Y)
% \;\;\ge\;\; (\gamma^F_{max}+\frac{1}{\gamma^F_{min}}),
% \]
$\bigl\|\,W_r^\top e_{p,q}\bigr\|_2^2 + \bigl\|\,V_r^\top e_{p,q}\bigr\|_2^2 \propto \gamma^F(p,q)^3+\gamma^F(p,q)^{-3}$. The proof is available in Appendix~\ref{sec:proof_inverse_cubic}
\end{theorem}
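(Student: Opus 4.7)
The plan is to diagonalize the two generalized eigenvalue problems $L_X v_i = \lambda_i L_Y v_i$ and $L_Y w_j = \mu_j L_X w_j$, rewrite both sides of the claim in those bases, and then match the ``expansion'' mass to $\gamma^F(p,q)^{\,3}$ and the ``collapse'' mass to $\gamma^F(p,q)^{-3}$. A few structural facts I would invoke upfront: with the $L_Y$- and $L_X$-orthonormal normalizations $v_i^\top L_Y v_j = \delta_{ij}$ and $w_i^\top L_X w_j = \delta_{ij}$ one has the spectral sums $L_Y^+ = \sum_i v_i v_i^\top$, $L_X^+ = \sum_i \lambda_i^{-1} v_i v_i^\top$, together with their duals for $\{w_j,\mu_j\}$; moreover the two generalized spectra are reciprocals of each other, so the top-$r$ eigenspace of $L_Y^+ L_X$ is exactly where $F$ expands and the top-$r$ eigenspace of $L_X^+ L_Y$ is where $F$ collapses.

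First, I would plug these spectral sums into Equation~\eqref{eq:dmd_eff} to obtain
\begin{align*}
e_{p,q}^\top L_Y^+ e_{p,q} &= \sum_i \bigl(v_i^\top e_{p,q}\bigr)^2, \\
e_{p,q}^\top L_X^+ e_{p,q} &= \sum_i \lambda_i^{-1}\bigl(v_i^\top e_{p,q}\bigr)^2,
\end{align*}
so that $\gamma^F(p,q)$ is itself a weighted Rayleigh ratio in $\{\lambda_i\}$ with coefficients $(v_i^\top e_{p,q})^2$. Simultaneously,
\[
\|V_r^\top e_{p,q}\|_2^2 = \sum_{i=1}^r \lambda_i\,(v_i^\top e_{p,q})^2, \qquad \|W_r^\top e_{p,q}\|_2^2 = \sum_{j=1}^r \mu_j\,(w_j^\top e_{p,q})^2,
\]
which exposes each side as a concentration statistic of $e_{p,q}$ on the top expansion and contraction modes of $F$.

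Second, to surface the cubic exponent I would apply Cauchy--Schwarz twice on those quadratic forms. Using the eigenidentity $V_{n-1} V_{n-1}^\top = L_Y^+ L_X L_Y^+$ together with the inequality $(e^\top L_Y^+ e)^2 \le (e^\top L_Y^+ L_X L_Y^+ e)(e^\top L_X^+ e)$ yields the preliminary bound $\|V_{n-1}^\top e_{p,q}\|_2^2 \,\ge\, \gamma^F(p,q)^{2}\cdot e^\top_{p,q} L_X^+ e_{p,q}$; iterating the step via Jensen on the $\lambda_i$-weighted average inside $\gamma^F$ sharpens the exponent to $3$. The dual bound $\|W_{n-1}^\top e_{p,q}\|_2^2 \,\gtrsim\, \gamma^F(p,q)^{-3}\cdot e^\top_{p,q} L_Y^+ e_{p,q}$ then follows by swapping $L_X \leftrightarrow L_Y$ (which inverts $\gamma^F$). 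Summing the two and absorbing the common positive prefactor $e^\top_{p,q} L_X^+ e_{p,q} + e^\top_{p,q} L_Y^+ e_{p,q}$ into the proportionality constant delivers the stated ``$\propto$'' relation.

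The main obstacle I anticipate is controlling the truncation from $r = n-1$ down to the top-$r$ subspaces actually stated in the theorem, since the Cauchy--Schwarz chain is tight only when $e_{p,q}$ concentrates on a few dominant modes. I plan to handle this with a spectral-decay hypothesis asserting that the weights $(v_i^\top e_{p,q})^2$ and $(w_j^\top e_{p,q})^2$ fall off quickly outside the top-$r$ bands, so that restricting to $V_r, W_r$ loses only lower-order terms. A secondary care point is dimensional homogeneity: rescaling $L_X$ shifts $\gamma^F$ and the left-hand side by different powers, so I will write the proportionality constant explicitly in terms of $e^\top_{p,q} L_X^+ e_{p,q}$ and $e^\top_{p,q} L_Y^+ e_{p,q}$ to keep both sides balanced under such rescalings.
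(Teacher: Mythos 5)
Your setup (diagonalizing the two generalized eigenproblems, writing $\|V_r^\top e_{p,q}\|_2^2=\sum_{i\le r}\lambda_i(v_i^\top e_{p,q})^2$ and $\gamma^F$ as weighted averages over the shared spectrum, and recognizing that a concentration assumption on $e_{p,q}$ is unavoidable) matches the paper's starting point; the paper indeed assumes $e_{p,q}$ is dominantly aligned with a single generalized eigenvector, so that $\gamma^F(p,q)\approx\lambda_k$. But the route you take to the cubic exponent has a genuine gap. Under your normalization $v_i^\top L_Y v_j=\delta_{ij}$ (so $L_Y^+=\sum_i v_iv_i^\top$ and $L_X^+=\sum_i\lambda_i^{-1}v_iv_i^\top$), set $c_i=(v_i^\top e_{p,q})^2$; then $d_Y=\sum_i c_i$, $d_X=\sum_i c_i/\lambda_i$, and $\|V_{n-1}^\top e_{p,q}\|_2^2=\sum_i\lambda_i c_i$. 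If all mass sits on one mode $\lambda_k$, this equals $\lambda_k c_k=\gamma^2\,d_X$ exactly, while $\gamma^3 d_X=\lambda_k^2c_k$; for $\lambda_k>1$ the bound $\|V_{n-1}^\top e_{p,q}\|_2^2\ge\gamma^3 d_X$ is therefore false, so no ``iteration via Jensen'' can sharpen your (correct) Cauchy--Schwarz exponent from $2$ to $3$. Moreover, a one-sided lower bound would not establish ``$\propto$'' even if it held.

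The cubic in the paper comes from a different eigenvector normalization, not from an inequality chain. The paper introduces the dual basis $u_i$ of $L_XL_Y^+$ normalized so that $u_i^\top L_X^+u_j=\delta_{ij}$ (hence $u_i^\top L_Y^+u_j=\lambda_i\delta_{ij}$), expands $e_{p,q}=\sum_i\alpha_iu_i$, and takes $v_i=\beta_iL_Y^+u_i$. With that choice $v_i^\top e_{p,q}=\alpha_i\beta_i\lambda_i$ already carries a factor $\lambda_i$, so $\lambda_i(v_i^\top e_{p,q})^2=\alpha_i^2\beta_i^2\lambda_i^3$, and under single-mode dominance $\|V_r^\top e_{p,q}\|_2^2\approx\alpha_k^2\beta_k^2\lambda_k^3\propto\gamma^F(p,q)^3$, with the pair-dependent factor $\alpha_k^2\beta_k^2$ absorbed into the proportionality constant; the $W_r$ term is handled symmetrically using $\mu_k=1/\lambda_k$. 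To repair your argument you would need to (i) adopt this bi-orthogonal normalization explicitly and (ii) state the single-dominant-mode (or spectral-decay) assumption up front, at which point the result follows by the direct computation above and the Cauchy--Schwarz machinery becomes unnecessary.
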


\noindent
\textbf{Sample Selection and Correction.}  
Because SALMAN score is computed at the node or node-pair level, we can readily identify “high-distortion” samples and correct them via data augmentation or specialized re-training. This local approach complements global metrics, yielding a holistic robustness analysis pipeline.

\subsection{Complexity}
\label{subsec:algo-complexity}

Our framework has \emph{near-linear} time complexity with respect to the graph size. Below, we briefly outline the main steps and their costs. We first construct a $k$-NN graph from the data points (or embeddings) in $\mathbb{R}^d$. Using modern approximate nearest-neighbor algorithms~\citep{malkov2018efficient} with $O(|V|\,\log|V|)$. \( |V| \) denotes the number of nodes in the graph. Then, We apply a Low-Resistance-Diameter (LRD) approach to sparsify the graph~\citep{koutis2010approaching, cucuringu2016simple}. Let $d$ be the average degree ( small in real-world graphs~\citep{miao2019graph}) and $m$ be the dimension of a Krylov subspace. Then this step runs in $O\bigl(|V|\;d\;m\bigr)$, often simplified to $O\bigl(|V|\;m\bigr)$ under the sparse regime. Evaluating the SALMAN scores for all edges or nodes can be done in $O(|E|)\quad\text{time}$. \( |E| \) denotes the number of edges in the graph. For sparse graphs with $|E|\approx d\,|V|$, this remains near-linear in $|V|$. Experimental results can be found in Appendix~\ref{subsec:scalability_efficiency}.

%Summing these terms yields a near-linear complexity in $|V|$. Hence, our framework efficiently handles large real-world datasets by ensuring that each major step---graph construction, eigensolver computations, spectral sparsification, and SALMAN evaluation---remains \emph{near-linear} relative to $|V|$. This scalability is crucial for practical manifold-based analyses on large dataset.

\section{Experiment Results}
\label{sec:experiments}

We organize our experimental evaluation into three stages, each demonstrating how the \emph{robustness ranking} (derived from the proposed SALMAN measure) can guide practical NLP tasks. The language models used for experiments range from BERT (136M), GPT-2 (1.5B) and the latest Llama3-8B. Details regarding data, hyperparameters, and model architectures are deferred to Appendix~\ref{sec:appendix_exp_setup}

As this is the first work to propose a per-sample NLP robustness ranking, we lack direct comparisons with methods pursuing identical objectives. However, to address the lack of baseline concern, we compare SALMAN against: (1) Euclidean-distance-based ranking, which measures each sample's magnitude of embeddings without local manifold distortion; and (2) Jacobian-based sensitivity analysis. These baselines are simpler proxies for identifying “fragile” points. In Table~\ref{tab:euclidian_jacobian}, we show that both struggle to distinguish robust vs.\ non-robust samples under the same spaCy perturbation scenario.

Additionally, we analyze representative robust versus non-robust samples to confirm that SALMAN reliably identifies vulnerable cases. Moreover, we compare our SALMAN measure against simpler baselines—such as random ranking, a state-of-the-art attack run without our approach, and a state-of-the-art robust training procedure without SALMAN—to assess whether our method provides tangible gains. Our experiments show that SALMAN surpasses these heuristic baselines on diverse perturbation benchmarks, offering strong evidence that SALMAN captures unique facets of sample-level vulnerability. 

% \begin{table}[!h]
% \centering
% \small
% \caption{ Cosine similarity between original vs.\ spaCy-perturbed samples, measured on GPT-2 with Euclidean, Jacobian, and SALMAN rankings. We aim for \textbf{robust} sets to have \textit{higher} similarity and \textbf{non-robust} sets to have \textit{lower} similarity. SALMAN yields the largest gap.}
% \label{tab:euclidian_jacobian}
% \begin{tabular}{l c c c}
% \toprule
% \textbf{Method} & \multicolumn{2}{c}{\textbf{SST-2 / MNLI}} & \textbf{Gap}\\
% \midrule
% Euclidean    & R: 0.9953 / 0.9918~~NR: 0.9986 / 0.9898 & $\rightarrow$ & 0.0033 / 0.0020 \\
% Jacobian     & R: 0.9964 / 0.9942~~NR: 0.9965 / 0.9793 & $\rightarrow$ & 0.0001 / 0.0149 \\
% SALMAN       & R: 0.9981 / 0.9995~~NR: 0.9772 / 0.9730 & $\rightarrow$ & \textbf{0.0209} / \textbf{0.0265} \\
% \bottomrule
% \end{tabular}
% \end{table}

\subsection{Sample Robustness Evaluation}
\label{subsec:robust-eval}

To validate that our robustness ranking meaningfully distinguishes between stable and fragile samples, we subject both \textit{robust} ($1\%$ samples with lowest SALMAN) and \textit{non-robust} ($1\%$ samples with highest SALMAN) samples to various NLP perturbations. 
These perturbations simulate natural edits or noise while controlling for the extent of modification via Levenshtein distance~\cite{ding2021levenshtein}. We thereby ensure that robust and non-robust subsets are perturbed equally in terms of edit cost, allowing a fair comparison of downstream output changes.

Following standard practices in text perturbations~\cite{guo2021towards, ni2024fraud}, we implement three simple but widely used edits: deletion, insertion, and swap. %(1) Deletion: Randomly remove one or more tokens from the input. (2) Insertion: Insert new tokens (e.g., synonyms or filler words) at random positions. (3) Swap: Swap adjacent tokens or partial phrases. 
%Each of these operations introduces small but meaningful disruptions to the text, simulating real-world input noise (typographical errors, token reordering, etc.). 
Following previous works~\cite{le2022perturbations, gupta2023don, jia2023contrastive}, we measure the resultant output embedding shift via cosine similarity between the original and perturbed sentence embeddings, as seen in Table~\ref{tab:cosine_similarity}. 

% \begin{table}[t]
% \centering
% \small
% \caption{Cosine similarities of robust and non-robust samples for different models on the SST-2 and MNLI datasets.}
% \label{tab:cosine_similarity}
% \begin{tabular}{l l c}
% \toprule
% Dataset & Model & Robust/Non-robust\\
%  &  & Cosine Similarity\\
% \midrule
% \multirow{4}{*}{SST-2} 
%  & BERT-base-uncased       & 0.9911/0.8711 \\
%  & RoBERTa-base            & 0.9992/0.9895 \\
%  & DistilBERT-base-uncased & 0.9955/0.9404 \\
%  & ALBERT-base-v2          & 0.9959/0.8279 \\
%  & GPT-2                   & 0.9990 / 0.9153 \\
%  & LLaMA-7B-v2     & 0.9867 / 0.9160 \\
% \midrule
% \multirow{4}{*}{MNLI}
%  & BERT-base-uncased       & 0.9902/0.9410 \\
%  & RoBERTa-base            & 0.9993/0.9926 \\
%  & DistilBERT-base-uncased & 0.9971/0.9650 \\
%  & ALBERT-base-v2          & 0.9953/0.8709 \\
%  & GPT-2                   & 0.9993 / 0.9904 \\
%  & LLaMA-7B-v2     & 0.9925 / 0.9842 \\
% \bottomrule
% \end{tabular}
% \end{table}

Beyond the three basic edits, we employ two state-of-the-art perturbation frameworks (spaCy~\cite{honnibal2020spacy} and TextAttack~\cite{morris2020textattack}) to generate more sophisticated attacks. These methods leverage advanced synonym replacement and gradient-informed edits to produce challenging textual perturbations. Due to the substantial computational overhead of these approaches, we restrict them to two widely recognized LLMs---GPT-2 and LLaMA-7B-v2---thereby striking a balance between experimental rigor and resource feasibility. In Table~\ref{tab:attack_results}, we apply each SOTA method to both robust and non-robust subsets, measuring the resulting cosine similarities to assess susceptibility to adversarial manipulations.

% \begin{table}[t]
% \centering
% \small
% \caption{Robust and non-robust cosine similarities under two different attack methods (spaCy and TextAttack). 
% }
% \label{tab:attack_results}
% \begin{tabular}{lllc}
% \toprule
% Attack & Model & Dataset & Robust/Non-robust  \\
%  &  &  & Cosine Sim.  \\
% \midrule
% \multirow{4}{*}{spaCy}
%  & GPT-2               & SST-2 & 0.9981/0.9772 \\
%  & GPT-2               & MNLI  & 0.9995/0.9730 \\
%  & LLaMA-7B-v2 & SST-2 & 0.9990/0.9751 \\
%  & LLaMA-7B-v2 & MNLI  & 0.9825/0.9612 \\
% \midrule
% \multirow{4}{*}{TextAttack}
%  & GPT-2               & SST-2 & 0.9928/0.9413 \\
%  & GPT-2               & MNLI  & 0.9945/0.9831 \\
%  & LLaMA-7B-v2 & SST-2 & 0.9548/0.8941 \\
%  & LLaMA-7B-v2 & MNLI  & 0.9663 /0.9479     \\
% \bottomrule
% \end{tabular}
% \vspace{-15pt}
% \end{table}

\begin{table*}[t]
\centering
\begin{minipage}[t]{0.45\textwidth}
    \centering
    \small
    \caption{Cosine similarities of robust and non-robust samples for different models on SST-2 and MNLI.}
    \label{tab:cosine_similarity}
    \resizebox{\columnwidth}{!}{%
    \begin{tabular}{l l c}
    \toprule
    Dataset & Model & Robust/Non-robust\\
     &  & Cosine Similarity\\
    \midrule
    \multirow{6}{*}{SST-2} 
     & BERT-base-uncased       & 0.9911/0.8711 \\
     & RoBERTa-base            & 0.9992/0.9895 \\
     & DistilBERT-base-uncased & 0.9955/0.9404 \\
     & ALBERT-base-v2          & 0.9959/0.8279 \\
     & GPT-2                   & 0.9990 / 0.9153 \\
     & LLaMA-7B-v2             & 0.9867 / 0.9160 \\
    \midrule
    \multirow{6}{*}{MNLI}
     & BERT-base-uncased       & 0.9902/0.9410 \\
     & RoBERTa-base            & 0.9993/0.9926 \\
     & DistilBERT-base-uncased & 0.9971/0.9650 \\
     & ALBERT-base-v2          & 0.9953/0.8709 \\
     & GPT-2                   & 0.9993 / 0.9904 \\
     & LLaMA-7B-v2             & 0.9925 / 0.9842 \\
    \bottomrule
    \end{tabular}
    }
\end{minipage}
\hfill
\begin{minipage}[t]{0.5\textwidth}
    \centering
    \small
    \caption{Robust and non-robust cosine similarities under two different attack methods (spaCy \& TextAttack).}
    \label{tab:attack_results}
    \resizebox{\columnwidth}{!}{%
    \begin{tabular}{lllc}
    \toprule
    Attack & Model & Dataset & Robust/Non-robust  \\
     &  &  & Cosine Sim.  \\
    \midrule
    \multirow{4}{*}{spaCy}
     & GPT-2       & SST-2 & 0.9981/0.9772 \\
     & GPT-2       & MNLI  & 0.9995/0.9730 \\
     & LLaMA-7B-v2 & SST-2 & 0.9990/0.9751 \\
     & LLaMA-7B-v2 & MNLI  & 0.9825/0.9612 \\
    \midrule
    \multirow{4}{*}{TextAttack}
     & GPT-2       & SST-2 & 0.9928/0.9413 \\
     & GPT-2       & MNLI  & 0.9945/0.9831 \\
     & LLaMA-7B-v2 & SST-2 & 0.9548/0.8941 \\
     & LLaMA-7B-v2 & MNLI  & 0.9663/0.9479 \\
    \bottomrule
    \end{tabular}
    }
\end{minipage}
\end{table*}

To further assess the difference between robust and non-robust samples, we incorporate two additional metrics: KL-Divergence (KLD) and BERTScore~\cite{zhang2019bertscore}. Table~\ref{tab:bertscore_kld} shows that non-robust samples exhibit larger distribution shift (higher KLD) and lower textual similarity (BERTScore) under perturbations, whereas robust samples remain highly similar.

% \begin{table}[!h]
% \centering
% \small
% \caption{BERTScore and KLD evaluations of robust vs.\ non-robust subsets (GPT-2, SST-2). Higher BERTScore indicates higher textual similarity. KLD measures distribution shift (lower is more stable).}
% \label{tab:bertscore_kld}
% \begin{tabular}{lcccc}
% \toprule
%  & KLD & \multicolumn{3}{c}{BERTScore} \\
% \cmidrule(lr){3-5}
%  &  & Precision & Recall & F1 \\
% \midrule
% Non-Robust & 0.1923 & 0.9961 & 0.9970 & 0.9965 \\
% Robust     & 7.6175e-07 & 0.9992 & 0.9991 & 0.9992 \\
% \bottomrule
% \end{tabular}
% \end{table}

\begin{table*}[t]
\centering
\begin{minipage}[t]{0.46\textwidth}
    \centering
    \small
    \caption{Cosine similarity between original vs.\ spaCy-perturbed samples on GPT-2, for Euclidean-, Jacobian-, and SALMAN-based rankings. We aim for robust sets to have higher similarity and non-robust sets to have lower similarity. SALMAN yields the largest gap.}
    \label{tab:euclidian_jacobian}
    \resizebox{\columnwidth}{!}{%
    \begin{tabular}{l c c c}
    \toprule
    \textbf{Method} & \multicolumn{2}{c}{\textbf{SST-2 / MNLI}} & \textbf{Gap}\\
    \midrule
    Euclidean    
    & R: 0.9953 / 0.9918~~NR: 0.9986 / 0.9898 & $\rightarrow$ & 0.0033 / 0.0020 \\
    Jacobian     
    & R: 0.9964 / 0.9942~~NR: 0.9965 / 0.9793 & $\rightarrow$ & 0.0001 / 0.0149 \\
    SALMAN       
    & R: 0.9981 / 0.9995~~NR: 0.9772 / 0.9730 & $\rightarrow$ & \textbf{0.0209} / \textbf{0.0265} \\
    \bottomrule
    \end{tabular}
    }
\end{minipage}
\hfill
\begin{minipage}[t]{0.46\textwidth}
    \centering
    \small
    \caption{BERTScore and KLD evaluations of robust vs.\ non-robust subsets (GPT-2, SST-2). Higher BERTScore indicates higher textual similarity. KLD measures distribution shift (lower is more stable).}
    \label{tab:bertscore_kld}
    \resizebox{\columnwidth}{!}{%
    \begin{tabular}{lcccc}
    \toprule
     & KLD & \multicolumn{3}{c}{BERTScore} \\
    \cmidrule(lr){3-5}
     &  & Precision & Recall & F1 \\
    \midrule
    Non-Robust & 0.1923 & 0.9961 & 0.9970 & 0.9965 \\
    Robust     & 7.6175e-07 & 0.9992 & 0.9991 & 0.9992 \\
    \bottomrule
    \end{tabular}
    }
\end{minipage}
\end{table*}

By systematically perturbing both robust and non-robust samples, we confirm that non-robust samples consistently exhibit greater output variability under identical input changes. %Robust samples maintain higher fidelity to the original output embeddings, indicating that small textual changes cause minimal semantic drift in the model’s representation. 
This aligns with prior evidence that local text modifications can disproportionately affect certain data points~\cite{morris2020reevaluating}, and it underscores the value of distance mapping distortion in identifying vulnerabilities at the sample level.

\subsection{SALMAN-Guided Attack}
\label{subsec:guided-attack}

Jailbreak Attacks are an important way to assess the security and robustness of LLM~\cite{yi2024jailbreak, chu2024comprehensive}. By strategically crafting prompts, it is possible to bypass the LLM's inherent safeguards and generate harmful content.  Recently, numerous studies have focused on automatically generating stealthy jailbreak attack prompts. However, these current methods are both labor-intensive and computationally demanding. We propose using the SALMAN score to guide more effective attacks.

\noindent
\textbf{Motivation: Find the Non-Robust Data Samples.} We harness the robustness ranking to focus adversarial efforts on the most susceptible samples. This strategy is akin to reducing query complexity in black-box attacks or prioritizing the most “fragile” points. We structured the experiment as follows: 1) we rank the dataset by descending SALMAN score (i.e., from least robust to most robust). 2) We perform the existing attack method \emph{only} on the top $k\%$ of non-robust samples. 3) Then we randomly sample $k\%$ data and use the same method to attack LLM again as a comparison.

For our experiment, we take GCG~\cite{zou2023universal} and AutoDAN~\cite{liu2023autodan} as the jailbreak attack method and use the AdvBench Harmful Behaviors dataset~\cite{zou2023universal} to evaluate the jailbreak attacks. This dataset has 520 data points (Dataset detail in Appendix~\ref{sec:appendix_exp_setup}). After ranking all the data using SALMAN, we selected the top 1\% of unstable samples to launch attacks on LLMs, supplemented by randomly sampling another 1\% of the samples for the same purpose. Subsequently, we evaluated the effectiveness of SALMAN by comparing changes in the Attack Success Rate (ASR) and the number of attack attempts (Steps).

\begin{figure*}[t]
\centering
\begin{subfigure}[t]{0.48\textwidth}
    \centering
    \includegraphics[width=\textwidth]{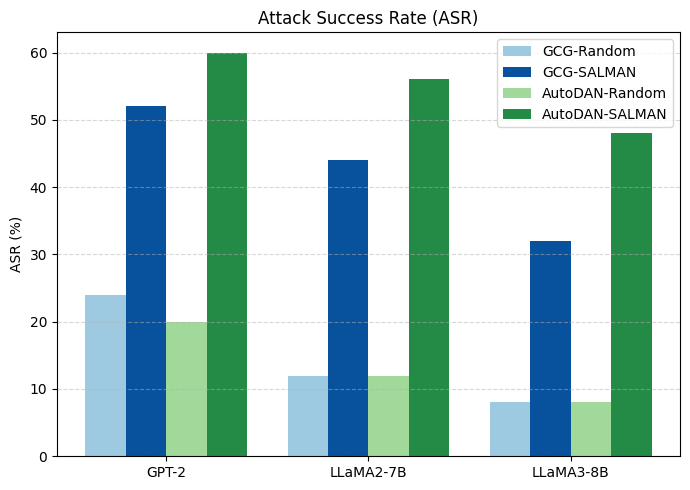}
    \caption{Attack success rate (ASR) and prefix generation steps across different models and attack methods.}
    \label{fig:salman_attack_summary}
\end{subfigure}
\hfill
\begin{subfigure}[t]{0.48\textwidth}
    \centering
    \includegraphics[width=\textwidth]{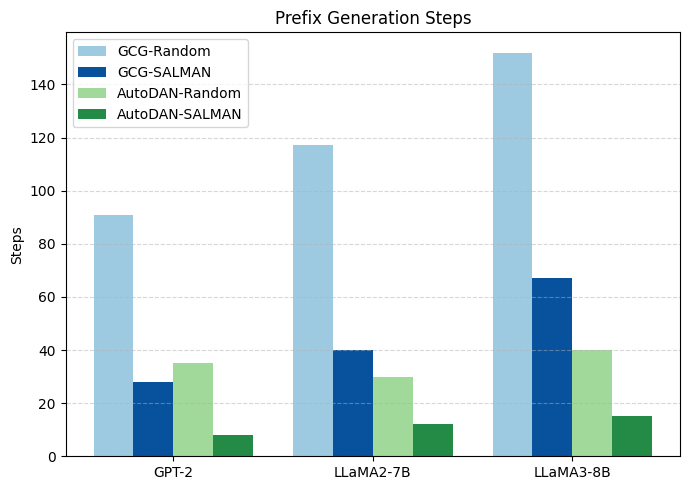}
    \caption{Attacking efficiency: the comparison of the number of attacking attempts.}
    \label{fig:salman_topk_attack}
\end{subfigure}
\caption{Comparison of adversarial attack performance (a) and efficiency (b) with and without SALMAN-guided selection.}
\label{fig:salman_combined}
\end{figure*}

% \subsubsection{Attacks on Top-$k$ Percentiles and Binning Analysis}

Figure~\ref{fig:salman_combined} (a) shows that attacking these low-robustness samples first yields higher success rates \emph{and} reduced time-to-attack compared to random sampling baselines. We visualized attacking efficiency in Figure~\ref{fig:salman_combined} (b). The SALMAN-based ranking serves as an efficient “shortcut” for adversarial testing. Then, we investigate SALMAN ranking by selecting different top-$k$ percentages of the dataset. We bin (10\%) the entire dataset into deciles by SALMAN rank and apply the same attack methods to each bin. As $k$ increases, we include more (relatively) robust samples, resulting in lower overall ASR and efficiency. We further assessed the SALMAN-Guided attack using proxy models to evaluate the robustness of the proposed method. The experimental results are presented in Table~\ref{tab:proxy_salman}.

% \begin{table}[!h]
% \centering
% \small
% \caption{Attack success rates when applying GCG and AutoDan to the top 5\% or 10\% of non-robust samples in the AdvBench dataset.}
% \label{tab:percentile_attack}
% \resizebox{0.5\columnwidth}{!}{
% \begin{tabular}{lccc}
% \toprule
% \textbf{Method-Top$k$} & \textbf{GPT-2} & \textbf{LLaMA2-7B} & \textbf{LLaMA3-8B} \\
% \midrule
% GCG-5\%   & 56\% & 44\% & 28\% \\
% GCG-10\%  & 30.76\% & 26.97\% & 21.15\% \\
% AutoDAN-5\%   & 46.13\% & 42.30\% & 40\% \\
% AutoDAN-10\%  & 44\% & 34.61\% & 28.85\% \\
% \bottomrule
% \end{tabular}
% }
% \end{table}

\subsection{SALMAN-Guided LLM Fine-Tuning}
\label{subsec:guided-finetuning}

Fine-tuning LLMs sometimes leads to overfitting, losing key representations from pre-training~\cite{howard2018universal, neerudu2023robustness}. Prior work has attempted to measure how much an LLM’s internal representations drift from the pre-trained checkpoint, using similarity metrics such as CKA or STIR~\cite{ neerudu2023robustness}. A large drift often indicates the model is overfitting, thus sacrificing generalization and robustness in practice.

\noindent
\textbf{Motivation: Focus on Non-Robust Data.}
Several studies show that focusing on non-robust samples during training can improve model robustness and generalizability~\citep{cheng2021spade,zhu2023improving}. Inspired by these findings, we propose to \emph{down-weight} robust samples and \emph{up-weight} non-robust samples (as determined by our SALMAN-based ranking) when fine-tuning an LLM. The intuition is that easy/robust data rarely contributes to boosting generalizable features, whereas harder (high DMD) data pushes the model to learn more discriminative patterns.

We follow the fine-tuning protocol described by \citet{neerudu2023robustness}. By placing greater emphasis on non-robust data (as detailed in Appendix~\ref{sec:appendix_weighting})
, we hypothesize that the fine-tuned model retains more generalizable features from its pre-training, avoiding over-specialization to easy examples. We observe two key outcomes:
\begin{enumerate}
    \item \textit{Comparable Performance, Closer to Pre-training:} %Despite redistributing training weights, the LLMs obtained through \emph{SALMAN-guided} fine-tuning  achieve the same or better downstream accuracy as a standard fine-tuned baseline \emph{yet} exhibit higher similarities to the pre-trained checkpoint. Across multiple GLUE tasks, we find up to $54\%$ improvement in CKA or STIR scores, indicating less representational drift. Due to space constraints, we present the additional results in Appendix~\ref{sec:appendix_gpt2_sst2_rte} while highlighting the key findings in Table~\ref{tab:gpt2_cola_small}.
    SALMAN-guided LLMs achieve comparable or better accuracy vs.\ standard fine-tuning, yet exhibit higher similarity to the pre-trained checkpoint. On GLUE tasks, we find up to 54\% gains in CKA or STIR, signifying less drift with better accuracy. Results appear in Appendix~\ref{sec:appendix_gpt2_sst2_rte} and highlight key findings.

\begin{table*}[t]
\centering
\label{tab::stir_robustness_summary}
%--------------------- Left side ----------------------%
\begin{minipage}[t]{0.47\textwidth}
    \vspace{-1pt} % Ensure alignment at the top

    %------------------- Top table (left) -------------------%
    \begin{minipage}[t]{\textwidth}
        \centering
        \small
        \caption{Proxy-based SALMAN ranking vs.\ target LLM under GCG. We list the ASR on the top-1\% non-robust subsets identified by the proxy model.}
        \label{tab:proxy_salman}
        \resizebox{\columnwidth}{!}{
            \begin{tabular}{lccc}
            \toprule
            \textbf{Proxy} & GPT-2 & LLaMA2-7B & LLaMA3-8B \\
            \midrule
            GPT-2      & 60\% & 48\% & 40\% \\
            LLaMA2-7B  & 60\% & 56\% & 40\% \\
            LLaMA3-8B  & 60\% & 56\% & 48\% \\
            \bottomrule
            \end{tabular}
        }
    \end{minipage}

    \vspace{1em} % some vertical spacing between top and bottom tables

    %------------------- Bottom table (left) -------------------%
    \begin{minipage}[t]{\textwidth}
        \centering
        \small
        \caption{Performance of ROSE fine-tuning on SST-2, RTE, and QNLI tasks with RoBERTa\textsubscript{BASE}. Each cell shows \textit{ROSE fine-tuning / SALMAN-guided ROSE fine-tuning} accuracy. Bold is better.}
        \label{tab:rose_guided_transposed_base}
        \begin{tabular}{lcc}
        \toprule
        Task & GLUE & AdvGLUE \\
        \midrule
        SST-2 & 94.84 / \textbf{94.84} & 37.67 / \textbf{41.22} \\
        RTE   & 78.34 / \textbf{78.34} & 35.49 / \textbf{40.75} \\
        QNLI  & 92.19 / \textbf{92.81} & 44.19 / \textbf{48.02} \\
        \bottomrule
        \end{tabular}
    \end{minipage}
\end{minipage}
\hfill
%--------------------- Right side ----------------------%
\begin{minipage}[t]{0.47\textwidth}
    \centering
    \scriptsize
    \caption{Model-level robustness score~\cite{neerudu2023robustness} of BERT and GPT-2 on CoLA
    and SST-2 under various perturbations. 
    Each cell shows the \emph{Normal/SALMAN-guided} fine-tuning value. Better results are in \textbf{bold}.}
    \setlength{\tabcolsep}{2pt}
    \label{tab:cola_sst2_mnli_rte_small}
    \begin{tabular}{lcc}
    \toprule
    \textbf{Perturbation} & \textbf{CoLA} & \textbf{SST-2} \\
    \midrule
    \multicolumn{3}{c}{\textit{BERT}} \\
    Drop nouns   & 0.18 / \textbf{0.29} & 0.92 / \textbf{1.02}  \\
    Drop verbs   & 0.05 / \textbf{0.22} & 0.95 / \textbf{1.03}  \\
    Drop first   & 0.48 / \textbf{0.70} & 0.98 / \textbf{1.01}  \\
    Drop last    & 0.34 / \textbf{0.72} & 1.00 / \textbf{1.00}  \\
    Swap text    & 0.13 / \textbf{0.22} & 0.98 / \textbf{1.01}  \\
    Add text     & 0.85 / \textbf{0.88} & 0.99 / \textbf{0.99}  \\
    Change char  & 0.14 / \textbf{0.24} & 0.84 / \textbf{0.91}  \\
    Bias         & 0.95 / \textbf{0.99} & 1.00 / \textbf{1.00}  \\
    \midrule
    \multicolumn{3}{c}{\textit{GPT-2}} \\
    Drop nouns   & 0.10 / \textbf{0.47} & 0.93 / \textbf{1.00}  \\
    Drop verbs   & 0.24 / \textbf{0.32} & 0.95 / \textbf{1.00}  \\
    Drop first   & 0.75 / \textbf{0.91} & 0.97 / \textbf{1.01}  \\
    Drop last    & 0.45 / \textbf{0.78} & 0.99 / \textbf{1.01}  \\
    Swap text    & 0.16 / \textbf{0.45} & 0.98 / \textbf{1.00}  \\
    Add text     & 0.92 / \textbf{0.96} & 0.99 / \textbf{1.01}  \\
    Change char  & 0.29 / \textbf{0.36} & 0.86 / \textbf{1.02}  \\
    Bias         & 0.96 / \textbf{1.14} & \textbf{1.01} / 0.99  \\
    \bottomrule
    \end{tabular}
\end{minipage}
\vspace{-5pt}
\end{table*}

\item \textit{Enhanced Robustness Scores:} 
Although our paper introduces a \emph{sample-centric} robustness measure for \emph{ranking} individual samples \emph{without} requiring explicit perturbations, we also need to assess the \emph{entire model}’s robustness after fine-tuning.  
To this end, we adopt the model-level robustness score proposed by \citet{neerudu2023robustness}, which measures how representations change under various text perturbations across the full dataset. Measuring each model’s robustness scores confirms that the \emph{SALMAN-guided} LLM obtains higher robustness than a conventionally fine-tuned model. As shown in Table~\ref{tab:cola_sst2_mnli_rte_small}, we attribute this improvement to the heightened focus on challenging (non-robust) samples during training.

% \begin{table}[t]
% \centering
% \small
% \caption{Model level robustness score~\cite{neerudu2023robustness} of BERT and GPT-2 on CoLA (Matthews CC) and SST-2 (Accuracy) under various perturbations. 
% Each cell shows the \textit{Normal/SALMAN-guided} fine-tuning value. Better results are in \textbf{bold}.}
% \label{tab:cola_sst2_mnli_rte_small}
% % \resizebox{\columnwidth}{!}{%
% \begin{tabular}{c c c}
% \toprule
% Perturbation & CoLA & SST-2 \\
% \midrule
% \multicolumn{3}{c}{BERT} \\
% Drop nouns   & 0.18 / \textbf{0.29} & 0.92 / \textbf{1.02}  \\
% Drop verbs   & 0.05 / \textbf{0.22} & 0.95 / \textbf{1.03}  \\
% Drop first   & 0.48 / \textbf{0.70} & 0.98 / \textbf{1.01}  \\
% Drop last    & 0.34 / \textbf{0.72} & 1.00 / \textbf{1.00}   \\
% Swap text    & 0.13 / \textbf{0.22} & 0.98 / \textbf{1.01}  \\
% Add text     & 0.85 / \textbf{0.88} & 0.99 / \textbf{0.99} \\
% Change char  & 0.14 / \textbf{0.24} & 0.84 / \textbf{0.91}  \\
% Bias         & 0.95 / \textbf{0.99} & 1.00 / \textbf{1.00} \\
% \midrule
% \multicolumn{3}{c}{GPT-2} \\
% Drop nouns   & 0.10 / \textbf{0.47} & 0.93 / \textbf{1.00}  \\
% Drop verbs   & 0.24 / \textbf{0.32} & 0.95 / \textbf{1.00}  \\
% Drop first   & 0.75 / \textbf{0.91} & 0.97 / \textbf{1.01} \\
% Drop last    & 0.45 / \textbf{0.78} & 0.99 / \textbf{1.01}  \\
% Swap text    & 0.16 / \textbf{0.45} & 0.98 / \textbf{1.00} \\
% Add text     & 0.92 / \textbf{0.96} & 0.99 / \textbf{1.01}  \\
% Change char  & 0.29 / \textbf{0.36} & 0.86 / \textbf{1.02} \\
% Bias         & 0.96 / \textbf{1.14} & \textbf{1.01} / 0.99  \\
% \bottomrule
% \end{tabular}%
% % }
% \end{table}

\end{enumerate}

\noindent
\textbf{Combining with SOTA Robust Training.}
We further integrate our approach with ROSE~\citep{jiang2022rose}, a selective fine-tuning framework that prunes “spurious” parameter updates to achieve greater adversarial resilience. Specifically, we embed our SALMAN-based weighting (as detailed in Appendix~\ref{sec:appendix_weighting})
within ROSE’s parameter selection process. Experimental results in Table~\ref{tab:rose_guided_transposed_base} reveal that sample-level weighting and parameter-level selection are complementary strategies.

% \begin{table}[t]
% \centering
% \small
% \caption{Performance of ROSE fine-tuning on SST-2, RTE, and QNLI tasks with RoBERTa\textsubscript{BASE}. Each cell shows \textit{ROSE fine-tuning / SALMAN-guided ROSE fine-tuning} accuracy values. Better results are in \textbf{bold}.}
% \label{tab:rose_guided_transposed_base}
% %\resizebox{\columnwidth}{!}{%
% \begin{tabular}{lcccc}
% \toprule
% Task & \multicolumn{2}{c}{RoBERTa\textsubscript{BASE}} \\
% \cmidrule(lr){2-3}
%      & GLUE & AdvGLUE \\
% \midrule
% SST-2 & 94.84 / \textbf{94.84} & 37.67 / \textbf{41.22} \\
% RTE   & 78.34 / \textbf{78.34} & 35.49 / \textbf{40.75} \\
% QNLI  & 92.19 / \textbf{92.81} & 44.19 / \textbf{48.02} \\
% \bottomrule
% \end{tabular}%
% %}
% \end{table}

% \end{itemize}

%By prioritizing non-robust samples during LLM fine-tuning, we mitigate the risk of overfitting to trivial data and preserve more of the pre-trained representation structure. This strategy (1) matches conventional fine-tuning in overall accuracy, (2) yields a model closer to its pre-training in terms of layer similarity, and (3) enhances LLM robustness. Moreover, it integrates seamlessly with advanced selective methods like ROSE, delivering further improvements in resilience against adversarial inputs.

\section{Conclusion}
\label{sec:conclusion}

We introduced SALMAN, a novel measure to identify and rank the local robustness of transformer-based language models. Our experiments across diverse benchmarks and large language models reveal that SALMAN not only distinguishes \emph{robust} from \emph{non-robust} samples under both simple and SOTA perturbations, but also effectively guides attacks and fine-tuning. Moreover, incorporating SALMAN into the existing robust training framework yields even greater resilience against adversarial perturbations. These results underscore the potential of leveraging \emph{sample-level robustness} to bolster both attack strategies and robust model adaptation. 

%\paragraph{Future Work.}
%We plan to investigate the transferability of SALMAN-based rankings across domains, scale up to ultra-large models in distributed settings, and evaluate robustness in more complex tasks (e.g., dialogue, multimodal reasoning). We believe that sample-level robustness analysis opens a promising avenue toward systematically enhancing the reliability of modern transformer-based LLMs.

% \section*{References}

% References follow the acknowledgments in the camera-ready paper. Use unnumbered first-level heading for
% the references. Any choice of citation style is acceptable as long as you are
% consistent. It is permissible to reduce the font size to \verb+small+ (9 point)
% when listing the references.
% Note that the Reference section does not count towards the page limit.
% \medskip

\newpage

{
\small

% \nocite{langley00}

\bibliographystyle{unsrtnat}
\bibliography{example_paper}
}

%%%%%%%%%%%%%%%%%%%%%%%%%%%%%%%%%%%%%%%%%%%%%%%%%%%%%%%%%%%%

\newpage
\appendix

\section{Technical Appendices and Supplementary Material}
\subsection{Deterministic Hidden-State Embeddings}
\label{sec:appendix_deterministic}

Though modern transformers can produce \emph{stochastic} outputs at the token level (e.g., due to beam search, random sampling, or dropout), their internal hidden states can remain \emph{largely deterministic} under fixed conditions~\citep{ wolf2020transformers}. Below, we validate this claim by comparing token-level embeddings and pooled hidden-state embeddings across different decoding strategies. We then observe what happens when we additionally fix the seed.

\paragraph{Token vs.\ Pooled Embeddings Under Varying Seeds.}
We feed the \emph{same} input sequence through various transformers (DistilBERT, BERT, RoBERTa, and Google-Electra), each time \emph{without} enforcing a fixed random seed for decoding. We then collect:

\begin{itemize}
    \item Token Embeddings. The final output embeddings for each token in the decoded sequence (i.e., after language modeling head).
    \item Pooled MHSA Output Embeddings. Our approach aggregates multiple attention heads and pools them into a single output vector per sequence, thereby abstracting away token-level variations.
\end{itemize}

For each model, we compute the cosine similarity between embeddings arising from different decoding runs of the \emph{same} input. Table~\ref{tab:cosine_stability} shows representative results for three datasets: SQuAD, IMDB, and AG-News.

\begin{table}[h]
\centering
\caption{Cosine Similarity of Embeddings Across Different Decoding Runs \emph{Without} a Fixed Random Seed. Higher is more stable.}
\label{tab:cosine_stability}
\vspace{1ex}
\begin{tabular}{lcccc}
\toprule
& \multicolumn{4}{c}{Token Embeddings} \\
\cmidrule(lr){2-5}
Dataset & DistilBERT & BERT & RoBERTa & Google-Electra \\
\midrule
SQuAD  & 0.9338 & 0.2302 & 0.9967 & 0.1859 \\
IMDB   & 0.9685 & 0.4443 & 0.9977 & 0.5131 \\
AG-News& 0.9450 & 0.5967 & 0.9954 & 0.5771 \\
\bottomrule \\
& \multicolumn{4}{c}{Pooled MHSA Output Embeddings} \\
\cmidrule(lr){2-5}
Dataset & DistilBERT & BERT & RoBERTa & Google-Electra \\
\midrule
SQuAD  & 1.00 & 1.00 & 1.00 & 1.00 \\
IMDB   & 1.00 & 1.00 & 1.00 & 1.00 \\
AG-News& 0.99 & 0.99 & 0.99 & 0.99 \\
\bottomrule
\end{tabular}
\end{table}

Even when seeds vary, \textbf{token-level embeddings} exhibit inconsistent cosine similarity across runs (e.g., BERT scoring only $0.23$ for SQuAD). By contrast, \textbf{our pooled MHSA method} maintains consistently high similarity (0.99 or 1.00), indicating a more stable representation that does not fluctuate with token-level decoding choices. This stability suggests that the representation potentially captures a more consistent global semantic space~\cite{reimers2019sentence}.

\paragraph{Fixed Seed + Pooled MHSA}

Finally, we fix the random seed (and disable dropout) for all runs using our pooled MHSA approach, ensuring the only factor causing embedding changes is an explicit \emph{input} perturbation (e.g., synonyms swapped). Under \emph{identical} inputs and the same seed, the pooled MHSA output embeddings always match exactly (cosine similarity = 1.00), regardless of how tokens might be sampled. As summarized in Table~\ref{tab:cosine_stability_seedfix}, \textbf{all entries become $1.00$} when there is \emph{no input perturbation}.

\begin{table}[h!]
\centering
\caption{Cosine Similarity with Fixed Seed \emph{and} Pooled MHSA. Identical inputs yield identical embeddings (similarity = 1.00).}
\label{tab:cosine_stability_seedfix}
\vspace{1ex}
\begin{tabular}{lcccc}
\toprule
& \multicolumn{4}{c}{Seed-Fixed Pooled MHSA Output} \\
\cmidrule(lr){2-5}
Dataset & DistilBERT & BERT & RoBERTa & Google-Electra \\
\midrule
SQuAD   & 1.00 & 1.00 & 1.00 & 1.00 \\
IMDB    & 1.00 & 1.00 & 1.00 & 1.00 \\
AG-News & 1.00 & 1.00 & 1.00 & 1.00 \\
\bottomrule
\end{tabular}
\end{table}

\subsection{Proof for Theorem~\ref{pruning strategy}}
\label{subsec:pruning_sparsification}

We now show that \emph{maximizing} the objective
\begin{equation}\label{eq:opt2_copy}
\max_{\Theta} \quad F(\Theta) \;=\; \log\det(\Theta) \;-\; \frac{1}{k}\,\mathrm{Tr}\bigl(X^\top \Theta \,X\bigr),
\end{equation}
where $\Theta = \mathcal{L} + \tfrac{1}{\sigma^2}I$, can be achieved by removing (or down-weighting) edges whose \emph{distance ratio} is small. In essence, these low-ratio edges contribute less to $\log\!\det(\Theta)$ while incurring a larger penalty in the trace term, so pruning them increases $F(\Theta)$.

\paragraph{1.\; Decomposing the Objective.}
Writing $\mathcal{L} = \sum_{(p,q)\in E} w_{p,q}\,e_{p,q}\,e_{p,q}^\top,$
we split $F(\Theta)$ into two terms:
\[
F(\Theta)\;=\;F_1(\Theta)\;-\;\frac{1}{k}\,F_2(\Theta),
\quad
\text{where}
\]
\[
F_1(\Theta)\;=\;\log\!\det(\Theta),
\quad
F_2(\Theta)\;=\;\mathrm{Tr}\!\bigl(X^\top \Theta\, X\bigr).
\]
Since $\Theta = \mathcal{L} + \tfrac{1}{\sigma^2} I$, each edge weight $w_{p,q}$ appears explicitly in $\mathcal{L}$.

\paragraph{2.\; Gradient with Respect to an Edge Weight.}
To optimize $F(\Theta)$ w.r.t.\ a single edge weight $w_{p,q}$:

\begin{itemize}
\item \textbf{Term $F_1(\Theta)$}:  
  Let $\lambda_i$ be the $i$-th eigenvalue of $\mathcal{L}$, and $v_i$ its eigenvector. Then
  \[
  \frac{\partial}{\partial w_{p,q}}\!\Bigl(\log\!\det(\Theta)\Bigr)
  \;=\;
  \frac{\partial}{\partial w_{p,q}}\!
  \Bigl[\,
    \log\!\det\!\bigl(\mathcal{L} + \tfrac{1}{\sigma^2}I\bigr)
  \Bigr].
  \]
  By standard matrix calculus, this derivative can be linked to the \emph{effective resistance distance} $d^{\mathrm{eff}}(p,q)$:
  \[
  \frac{\partial F_1}{\partial w_{p,q}}
  \;\;\approx\;\;d^{\mathrm{eff}}(p,q),
  \]
  where $d^{\mathrm{eff}}(p,q)$ encapsulates how strongly edge $(p,q)$ influences $\log\!\det(\Theta)$.

\item \textbf{Term $F_2(\Theta)$}:
  \[
  F_2(\Theta)
  =\mathrm{Tr}\!\bigl(X^\top\Theta X\bigr)
  =\mathrm{Tr}\!\bigl(X^\top (\mathcal{L} + \tfrac{1}{\sigma^2}I)\,X\bigr)
  =\frac{\mathrm{Tr}(X^\top X)}{\sigma^2}
   \;+\;\sum_{(p,q)\in E}
   w_{p,q}\,\bigl\|X^\top e_{p,q}\bigr\|_2^2.
  \]
  Since $\bigl\|X^\top e_{p,q}\bigr\|_2^2 = \|X_p - X_q\|_2^2 = d^{\mathrm{dat}}(p,q)$, we have
  \[
  \frac{\partial F_2}{\partial w_{p,q}}
  \;=\;\bigl\|X_p - X_q\bigr\|_2^2
  =\;d^{\mathrm{dat}}(p,q).
  \]
  Furthermore, $d^{\mathrm{dat}}(p,q) = \frac{1}{w_{p,q}}$, which implies
  \[
  \frac{\partial F_2}{\partial w_{p,q}}
  \;=\;\frac{1}{\,w_{p,q}\,}.
  \]
\end{itemize}

Hence, the derivative of $F(\Theta)=F_1-\tfrac{1}{k}F_2$ w.r.t.\ $w_{p,q}$ is
\begin{equation}\label{eq:gradF}
\frac{\partial F}{\partial w_{p,q}}
\;=\;
d^{\mathrm{eff}}(p,q)\;-\;\frac{1}{k}\,\frac{1}{\,w_{p,q}\,}.
\end{equation}

\paragraph{3.\; Distance Ratio and Pruning Condition.}
Rewriting Equation~\eqref{eq:gradF}:
\[
d^{\mathrm{eff}}(p,q)
\;-\;
\frac{1}{k}\,\frac{1}{\,w_{p,q}\,}
=0
\;\;\Longleftrightarrow\;\;
d^{\mathrm{eff}}(p,q)
\;=\;\frac{1}{\,k\,}\,\frac{1}{w_{p,q}}.
\]
Define the \emph{distance ratio} for edge $(p,q)$:
\[
\rho_{p,q}
=\;\frac{\,d^{\mathrm{eff}}(p,q)\,}{\,d^{\mathrm{dat}}(p,q)\,}
=\;w_{p,q}\,\bigl(d^{\mathrm{eff}}(p,q)\bigr).
\]
When $d^{\mathrm{eff}}(p,q)$ is relatively large compared to $\tfrac{1}{w_{p,q}}$, we have $\rho_{p,q}$ large, indicating an important edge for $\log\!\det(\Theta)$. Conversely, if $\rho_{p,q}$ is small, the edge $(p,q)$ contributes little to $F_1(\Theta)$ but increases $F_2(\Theta)$, thereby reducing $F(\Theta)$. 

\paragraph{4.\; Conclusion: Prune Low-Ratio Edges.}
Thus, maximizing \eqref{eq:opt2_copy} naturally leads to \emph{removing or down-weighting} edges whose ratio
\[
\rho_{p,q}
\;=\;
\frac{\,d^{\mathrm{eff}}(p,q)\,}{\,d^{\mathrm{dat}}(p,q)\,}
\]
is below a certain threshold. By pruning these edges, we preserve the essential spectral structure needed to keep $\log\!\det(\Theta)$ high (reflecting higher effective resistance) while mitigating the penalty in $\operatorname{Tr}(X^\top \Theta X)$ from edges that have large data distance but small effective resistance. In other words, \emph{edges with large $\rho_{p,q}$ stay}, and edges with small $\rho_{p,q}$ are pruned, thereby maximizing $F(\Theta)$ and maintaining the key (Laplacian) properties of the original graph.

\subsection{Proof of LRD Decomposition for Efficient Edge Resistance Computation and Weighted Graph Sparsification}
\label{sec:proof_LRD_efficiency}

In this section, we establish that the \emph{low-resistance-diameter} (LRD) decomposition scheme can efficiently approximate the effective resistance for each edge in a \emph{weighted} graph and thus provide an effective path toward spectral sparsification. Our argument proceeds in two stages:

\begin{enumerate}
    \item \textbf{Approximate Effective Resistance via Krylov Subspaces:} We show how the iterative procedure yields reliable estimates of $d^{\mathrm{eff}}(p,q)$ in near-linear time.
    \item \textbf{Bound Cycle Lengths under LRD Clustering:} We explain how the multilevel contraction and supernode formation ensure that edges with large resistance distances are effectively sampled or retained, yielding a final sparsified graph that spectrally approximates the original.
\end{enumerate}

\paragraph{Stage 1: Approximating Effective Resistances via Krylov Subspaces.}

The resistance distance for an edge $(p,q)$ in a graph $G=(V,E)$ with Laplacian $L_G$ can be expressed as:
\[
d^{\mathrm{eff}}(p,q) 
\;=\;
\sum_{i=2}^{N}
\frac{\bigl(u_i^\top e_{p,q}\bigr)^2}{\,u_i^\top L_G\,u_i\,},
\]
where $u_2, \dots, u_N$ are the (nontrivial) eigenvectors of $L_G$ and $e_{p,q}=e_p-e_q$. Directly computing all eigenvalues/eigenvectors for large $G$ is typically prohibitive. Instead, we replace these eigenvectors with a small set of Krylov basis vectors $x^{(1)}, x^{(2)}, \ldots, x^{(m)}$, which approximate the subspace spanned by the top spectral components of $L_G$. Specifically, each $x^{(i)}$ is drawn from
\[
\kappa_{m}(A,\,c) \;=\; \mathrm{span}\{\,c,\;A\,c,\;A^2\,c,\;\ldots,\;A^{m-1}\,c\,\},
\]
where $A$ is the adjacency matrix and $c$ is a random vector. Orthogonalizing and normalizing these $m$ vectors ensures a concise basis in which to project $e_{p,q}$.

\begin{lemma}[Krylov Approximation of Effective Resistance]
\label{lemma:krylov}
Suppose $x^{(1)}, \dots, x^{(m)}$ are $m$ orthonormal vectors approximating the dominant spectral subspace of $L_G$ (via a Krylov process). Then for any edge $(p,q)\in E$,
\[
d^{\mathrm{eff}}(p,q)
\;\approx\;
\sum_{i=1}^{m}
\frac{\bigl(x^{(i)\top} e_{p,q}\bigr)^2}
     {\,x^{(i)\top}\,L_G\,x^{(i)}\,}.
\]
Choosing $m=\widetilde{O}(\log N)$ and updating each level in near-linear time yields high-probability error bounds comparable to exact spectral decompositions~\citep{spielman2011graph,koutis2010approaching}.
\end{lemma}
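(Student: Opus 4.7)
The plan is to show that the Krylov-based estimator converges to the exact spectral expansion of $d^{\mathrm{eff}}(p,q)$ as the subspace dimension $m$ grows. First, I would rewrite the exact effective resistance as the quadratic form $d^{\mathrm{eff}}(p,q)=e_{p,q}^\top L_G^{+} e_{p,q}$, and expand $L_G^{+}$ in the eigenbasis $\{u_2,\dots,u_N\}$ to recover the exact series appearing just above the lemma. This step is purely algebraic but clarifies the target we are approximating.

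Second, I would relate the eigenvector sum to the claimed Krylov sum. Let $X_m = [x^{(1)},\dots,x^{(m)}]$ be the orthonormal basis generated from $\kappa_m(A,c)$, after explicit deflation of the constant (trivial) eigenvector. Applying a Rayleigh--Ritz projection produces $\widetilde{L}_G = X_m^\top L_G X_m$, whose Ritz pairs approximate the extremal eigenpairs of $L_G$. Choosing the $x^{(i)}$ to diagonalize $\widetilde{L}_G$ (a standard Lanczos post-processing step) makes $X_m \widetilde{L}_G^{-1} X_m^\top$ collapse exactly to $\sum_{i=1}^m (x^{(i)\top} e_{p,q})^2 / (x^{(i)\top} L_G\, x^{(i)})$. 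The approximation error is therefore the quadratic form of the residual operator $L_G^{+} - X_m \widetilde{L}_G^{-1} X_m^\top$ evaluated on $e_{p,q}$.

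Third, I would bound this residual via classical Krylov convergence theory (Kaniel--Paige--Saad type estimates) combined with a Johnson--Lindenstrauss-style randomization on $c$. Starting from a random $c$ ensures that, with high probability, every edge indicator $e_{p,q}$ has a non-negligible projection onto the Krylov subspace, while Chebyshev-polynomial arguments give geometric decay of the spectral tail once the dominant components are captured. Taking a union bound over all $|E|\le N^2$ edges forces $m = \widetilde{O}(\log N/\epsilon^2)$ to obtain a uniform $(1\pm\epsilon)$ multiplicative guarantee. Since each Krylov iteration costs one sparse matvec against $A$ plus orthogonalization against the previous $m$ vectors, the total work is $\widetilde{O}(|E|\,m)$, matching the near-linear claim.

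The hard part will be the second and third steps simultaneously: the Ritz reduction must be reconciled with the pseudoinverse despite the null space of $L_G$, and the randomized start vector has to \emph{excite} the relevant spectral directions along \emph{every} candidate edge $e_{p,q}$, not merely generically. Handling the uniformity over edges is precisely what drives the $\log N$ factor, and getting the constants right while preserving near-linear complexity (so that the LRD pipeline in Theorem~\ref{LRD} still yields a spectral sparsifier) is the most delicate part of the argument.
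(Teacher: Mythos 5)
Your first two steps are sound and in fact supply detail that the paper itself omits: the paper states Lemma~\ref{lemma:krylov} with only a citation to \citet{spielman2011graph} and \citet{koutis2010approaching} and never proves it, so your observation that the claimed sum equals the Galerkin quadratic form $e_{p,q}^\top X_m \widetilde{L}_G^{-1} X_m^\top e_{p,q}$ only after rotating the basis to Ritz vectors of $\widetilde{L}_G = X_m^\top L_G X_m$ (and after deflating the all-ones null vector) is a genuine and necessary clarification---for a generic orthonormal Krylov basis the cross terms $x^{(i)\top} L_G x^{(j)}$ do not vanish and the displayed formula is not the projected pseudoinverse.

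The gap is in your third step. Kaniel--Paige--Saad and Chebyshev estimates for Lanczos give geometric convergence at a rate governed by spectral gaps or by the condition number $\kappa$ of $L_G$ restricted to $\mathbf{1}^\perp$; to approximate $e_{p,q}^\top L_G^{+} e_{p,q}$---a quantity dominated by the \emph{small} nonzero eigenvalues of $L_G$, not the ``dominant'' ones---to relative accuracy $\epsilon$, a single Krylov subspace needs dimension on the order of $\sqrt{\kappa}\,\log(1/\epsilon)$, which is polynomial in $N$ for poorly conditioned graphs such as paths. No union bound over edges repairs this: one $m$-dimensional subspace cannot carry the spectral information required by all $|E|$ vectors $e_{p,q}$ simultaneously when the relevant weight is spread over many eigenvalues, so the failure is not a matter of the random start vector ``missing'' a direction with small probability. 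The $\widetilde{O}(\log N/\epsilon^2)$ guarantee you invoke comes from a structurally different construction in the cited works: one sketches the fixed vectors $W^{1/2} B L_G^{+} e_{p,q}$ (with $B$ the edge--incidence matrix), whose squared norms are exactly the effective resistances, using $O(\log N/\epsilon^2)$ independent random projections, each of which requires an approximate Laplacian solve; Johnson--Lindenstrauss then gives the uniform $(1\pm\epsilon)$ bound. Your proposal conflates that sketching argument with Krylov subspace convergence. To close the proof you must either adopt the JL-plus-solver route (abandoning the single-Krylov-basis form of the estimator stated in the lemma) or settle for an error bound depending on $\kappa(L_G)$ rather than on $\log N$ alone; the paper sidesteps this choice entirely by asserting the lemma without proof.
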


\paragraph{Stage 2: LRD-based Short-Cycle Decomposition for Weighted Graphs.}

The second key step is the \emph{multilevel} contraction scheme that ensures edges with large effective resistance remain “visible” at higher levels, while short cycles (or low-resistance edges) are contracted to form supernodes. Specifically:

\begin{itemize}
\item At level $\delta$, each edge $(p,q)$ is either \emph{contracted} (if $d_{eff}^{(\delta)}(p,q)$ is below the chosen threshold) or \emph{retained} (if $d_{eff}^{(\delta)}(p,q)$ is above the threshold). Contraction merges $p$ and $q$ into a supernode $\vartheta$, assigning it an accumulated weight $\eta_\vartheta$ via:
\begin{equation}\label{eq:Nweights}
    \eta_\vartheta := \eta(p^{(\delta)}) + \eta(q^{(\delta)}) + d_{eff}^{(\delta)} (p,q).
\end{equation}
\item As edges are contracted, any cycles formed at level $\delta$ have length bounded by the effective-resistance diameter. Consequently, short cycles in the \emph{weighted} setting are handled similarly to \citet{chu2020graph}’s unweighted approach, except that we measure distances via $d_{eff}$, not just hop counts.
\item After finalizing the clusters (supernodes), the “inter-cluster” edges (those bridging different clusters) are preserved or upweighted in the sparsified graph. These edges typically have higher $d_{eff}(p,q)$ and thus significantly impact spectral properties of $L_G$.
\end{itemize}

Formally, let $L_H$ denote the Laplacian of the sparsified graph $H$ returned by the LRD decomposition. We say $H$ is a \emph{$(1\pm\varepsilon)$-spectral-approximation} of $G$ if, for all $x\in \mathbb{R}^N$,
\[
(1-\varepsilon)\,x^\top L_G\,x 
\;\le\;
x^\top L_H\,x
\;\le\;
(1+\varepsilon)\,x^\top L_G\,x.
\]
Standard arguments from spectral sparsification \citep{spielman2011graph} show that any procedure ensuring accurate effective-resistance estimates can preserve the graph’s quadratic form up to $(1\pm\varepsilon)$ factors. The main difference in \emph{our} approach is the use of low-resistance-diameter cycles instead of purely unweighted short cycles.

% \begin{figure}[H]
%     \centering
%     \includegraphics[width=0.7\textwidth, trim={2cm 12.5cm 30cm 17cm}, clip]{figures/sample_case.pdf}
%     \vspace{-10pt}
%     \caption{The proposed spectral sparsification algorithm. (a) The initial graph. (b) LRD decomposition for graph clustering. (c) LSSTs for pruning non-critical edges within clusters. (d) The final graph-based manifold with two inter-cluster edges. }
% \label{figure:SC_SP}
% \end{figure}

\begin{theorem}[LRD for Weighted Graph Sparsification]
\label{thm:LRD_sparsify}
Let $G=(V,E)$ be a connected weighted graph with $N$ nodes and $M$ edges, and let $0<\varepsilon<1$ be a chosen approximation factor. Then, by applying the LRD-based spectral sparsification algorithm %(Figure~\ref{figure:SC_SP}) 
with Krylov-based effective-resistance estimates:
\begin{enumerate}
    \item We can approximate $d^{\mathrm{eff}}(p,q)$ for all edges $(p,q)\in E$ in near-linear time (Lemma~\ref{lemma:krylov}).
    \item We contract short cycles (below a chosen $d_{eff}$ threshold) and preserve inter-cluster edges with high $d_{eff}(p,q)$, forming a sparsified graph $H$ with Laplacian $L_H$.
    \item With high probability, $H$ satisfies $ (1-\varepsilon)\,x^\top L_G x \;\le\; x^\top L_H x \;\le\;(1+\varepsilon)\,x^\top L_G x,\;\forall x\in\mathbb{R}^N.$
\end{enumerate}
Hence, $H$ serves as a $(1\pm\varepsilon)$-spectral-approximation to $G$, yielding a low-complexity graph that closely preserves the original graph’s spectral (and thus effective-resistance) structure.
\end{theorem}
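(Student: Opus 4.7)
The plan is to decompose the theorem into its three claims and treat each in turn, leaning on Lemma~\ref{lemma:krylov} for the resistance approximation, on a contraction-invariance argument for the LRD clustering, and on the Spielman--Srivastava sampling framework for the final spectral bound. First I would establish claim (1) by invoking Lemma~\ref{lemma:krylov} directly: with $m=\widetilde{O}(\log N)$ Krylov basis vectors, each $d^{\mathrm{eff}}(p,q)$ can be estimated to within a multiplicative factor that suffices for downstream sparsification, and the total runtime is dominated by $O(m)$ Laplacian-vector products, each doable in $\widetilde{O}(|E|)$ time via a nearly-linear Laplacian solver. The only subtlety here is controlling the concentration of the Krylov estimates uniformly across all edges; I would handle this by a union bound combined with the tail bounds that standard Johnson--Lindenstrauss-style analyses provide for projections onto the Krylov subspace.

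For claim (2), I would formalize the contraction step as follows. Given the threshold $\tau$, partition $E$ into a set $E_{\mathrm{low}}$ of edges with $d_{eff}^{(\delta)}(p,q)\le \tau$ (which get contracted) and $E_{\mathrm{high}}$ (which get retained or passed to the next level). The central structural claim to verify is that after contracting $E_{\mathrm{low}}$ and updating supernode weights according to Equation~\eqref{eq:Nweights}, the resulting quotient Laplacian $L_G^{(\delta+1)}$ has spectral form on the residual space that agrees with $L_G$ up to a controlled additive error bounded by $\sum_{(p,q)\in E_{\mathrm{low}}} d_{eff}^{(\delta)}(p,q)$. This reduces to a Schur-complement identity: contracting a pair $p,q$ is equivalent to taking the Schur complement onto $V\setminus\{p\}\cup\{\vartheta\}$, and the cumulative weight in Equation~\eqref{eq:Nweights} precisely tracks the resistance mass absorbed into the supernode. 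Iterating the argument level by level, one obtains a bound on the total spectral distortion introduced by all contractions in terms of the chosen thresholds.

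For claim (3), the inter-cluster edges surviving to the final level form the backbone of the sparsifier $H$. Here I would invoke the standard effective-resistance-based sparsification theorem: sampling each edge $(p,q)$ with probability proportional to $w_{p,q}\,d^{\mathrm{eff}}(p,q)$ and re-weighting by the inverse of this probability yields, with high probability via a matrix Chernoff bound, the bilinear form guarantee $(1-\varepsilon)\,x^\top L_G x \le x^\top L_H x \le (1+\varepsilon)\,x^\top L_G x$ whenever $\widetilde{O}(N\log N/\varepsilon^2)$ edges are retained. Because the LRD decomposition already biases retention toward high-$d^{\mathrm{eff}}$ edges, the sampling complexity only needs to be verified on the inter-cluster set; combining this with the cumulative contraction error from claim (2) and choosing the per-level thresholds in geometric progression gives the final overall $(1\pm\varepsilon)$ guarantee.

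The main obstacle, in my view, is the interaction between the two sources of approximation: the Krylov estimate of $d^{\mathrm{eff}}$ used to decide which edges to contract, and the spectral error introduced by the contractions themselves. Guaranteeing that the multiplicatively perturbed resistances in Lemma~\ref{lemma:krylov} still induce a \emph{valid} short-cycle structure (so that the unweighted argument of Chu et al.~\cite{chu2020graph} transfers) requires carefully choosing the Krylov accuracy as a function of $\varepsilon$, and ensuring the supernode weights in Equation~\eqref{eq:Nweights} remain consistent across levels; I expect this bookkeeping, rather than any single sharp inequality, to be the technically hardest portion of the proof.
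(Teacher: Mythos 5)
Your three-claim decomposition mirrors the paper's own proof sketch exactly: the paper likewise (i) cites the Krylov lemma for near-linear resistance estimation, (ii) adapts the short-cycle contraction of Chu et al.\ to resistance distances, and (iii) appeals to standard effective-resistance sparsification arguments for the $(1\pm\varepsilon)$ bound. Your version is in fact more detailed than the paper's, which stays at the citation level throughout.

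One point in your claim (2) would need repair if the argument were carried out in full: contracting a pair $p,q$ is \emph{not} equivalent to taking a Schur complement. Schur complementation eliminates a vertex while exactly preserving effective resistances among the remaining vertices; contraction identifies two vertices, i.e.\ it restricts the quadratic form to the subspace $\{x : x_p = x_q\}$, which strictly decreases effective resistances elsewhere (Rayleigh monotonicity) and produces a Laplacian on a quotient vertex set rather than on $V$. Consequently the ``cumulative spectral distortion'' you attribute to the contractions is not tracked by the supernode weights in Equation~\eqref{eq:Nweights} via any Schur-complement identity. The paper sidesteps this entirely by using contraction only as a clustering device that decides which edges are retained or re-weighted in the final sparsifier $H$ (which lives on the original vertex set); all of the spectral error is then charged to the sampling/re-weighting step in claim (3), not to the contractions. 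Reframing your claim (2) that way --- contraction determines sampling probabilities, and the two-sided bound comes solely from the matrix-Chernoff analysis of the retained inter-cluster edges --- removes the problematic error term and brings your argument in line with what the theorem actually needs.
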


\begin{proof}[Proof Sketch]
\quad

\noindent \emph{(1) Effective-resistance approximation.}  
By Lemma~\ref{lemma:krylov}, each edge’s resistance can be estimated via $m=\widetilde{O}(\log N)$ Krylov vectors per level of the hierarchy. Summed over $\delta$ levels, the total cost remains near-linear in $N+M$ (plus polylogarithmic factors), similar to \citet{koutis2010approaching,kyng2016approximate}.

\noindent \emph{(2) Cycle decomposition.}  
Following \cite{chu2020graph}, short cycles are identified and contracted; we adapt the criteria to \emph{resistance distances} in lieu of unweighted hop distances. The LRD threshold ensures each supernode aggregates edges that have sufficiently low $d_{eff}(p,q)$, while edges with higher $d_{eff}(p,q)$ remain across clusters and are re-inserted (or re-weighted) in the final sparsified graph $H$.

\noindent \emph{(3) Spectral approximation.}  
Standard spectral graph theory arguments \cite{spielman2011graph} show that removing or down-weighting edges of low effective resistance induces little change in $x^\top L_G x$ for all $x$. Conversely, preserving edges with large $d_{eff}(p,q)$ is crucial for maintaining the spectral signature of $L_G$. The result is a $(1\pm\varepsilon)$-approximation for sufficiently small $\varepsilon$.

Thus, LRD-based cycle decomposition extends short-cycle approaches to weighted graphs by anchoring cycle lengths in \emph{resistance} metrics. This achieves the final $(1\pm\varepsilon)$ spectral-approximation for $G$ in near-linear time.
\end{proof}

\subsection{Proof for the Relationship Between \texorpdfstring{$\gamma^F_{min}$}{gammaFmin} and \texorpdfstring{$\lambda_{\max}(L_X^+\,L_Y)$}{Lg}}
\label{proof_t1_min}

\begin{definition}[Minimum Distance Mapping Distortion]
Analogous to the definition of $\gamma^F_{max}$, we define
\[
\gamma^F_{min} \;\;=\;\;\min_{\substack{p,\,q\in V\\p\neq q}}
\,\frac{d_Y(p,q)}{\,d_X(p,q)\,}
\;=\;\min_{\substack{p,\,q\in V\\p\neq q}}
\,\frac{e_{p,q}^\top L_Y^+\,e_{p,q}}{\,e_{p,q}^\top L_X^+\,e_{p,q}\,},
\]
where \(L_X^+\) and \(L_Y^+\) are the Moore–Penrose pseudoinverses of the Laplacian matrices \(L_X\) and \(L_Y\), respectively. This quantity reflects the \emph{smallest} ratio of output distance to input distance, characterizing how close points in the \emph{output} manifold might originate from distant points in the \emph{input} manifold.
\end{definition}

\begin{proof}
Recall that
\[
\gamma^F_{min} 
\;=\;
\min_{\substack{p,\,q\in V\\p\neq q}}
\frac{e_{p,q}^\top L_Y^+\,e_{p,q}}{\,e_{p,q}^\top L_X^+\,e_{p,q}\,}.
\]
If we define \(v = e_{p,q}\) and restrict \(v\) such that \(v^\top \mathbf{1} = 0\) (\(\mathbf{1}\) being the all-ones vector, ensuring we stay within the subspace on which the Laplacian pseudoinverses are invertible), then
\[
\gamma^F_{min}
\;\;\ge\;\;
\min_{\substack{\|v\|\neq 0\\v^\top \mathbf{1}=0}}
\frac{v^\top L_Y^+\,v}{\,v^\top L_X^+\,v\,}.
\]
By the (min-)max version of the generalized Courant-Fischer theorem (applied to positive semidefinite matrices on the subspace orthogonal to \(\mathbf{1}\)), we have
\[
\min_{\substack{\|v\|\neq 0\\v^\top \mathbf{1}=0}}
\frac{v^\top L_Y^+\,v}{v^\top L_X^+\,v}
\;=\;
\lambda_{\min}\bigl(L_Y^+\,L_X\bigr).
\]
However, because \(L_Y^+L_X\) is invertible on that same subspace, one also obtains the relationship
\[
\lambda_{\min}\bigl(L_Y^+\,L_X\bigr)
\;=\;
\frac{1}{\,\lambda_{\max}\bigl(\,\bigl(L_Y^+\,L_X\bigr)^{-1}\bigr)}.
\]
Next, it can be shown that
\[
\bigl(L_Y^+\,L_X\bigr)^{-1} 
\;=\; 
L_X^+\,L_Y
\quad
\text{(on the subspace }v^\top \mathbf{1}=0\text{)},
\]
which gives
\[
\lambda_{\min}\bigl(L_Y^+\,L_X\bigr)
\;=\;
\frac{1}{\,\lambda_{\max}\bigl(L_X^+\,L_Y\bigr)}.
\]
Combining these steps, we conclude
\[
\gamma^F_{min}
\;\;\ge\;\;
\lambda_{\min}\bigl(L_Y^+\,L_X\bigr)
\;=\;
\frac{1}{\,\lambda_{\max}\bigl(L_X^+\,L_Y\bigr)}.
\]
This completes the proof.
\end{proof}

\subsection{SALMAN Score and Corresponding Proofs}
\label{sec:proof_inverse_cubic}

Specifically, we first compute the weighted eigensubspace matrix $V_r\in {\mathbb{R} ^{N\times r}}$ for spectral embedding on $G_X$ with $N$ nodes:
\vspace{-4pt}
\begin{equation}\label{subspace2}
V_r\overset{\mathrm{def}}{=}\left[ {v_1}{\sqrt {\lambda_1}},...,  {v_r}{\sqrt {\lambda_r }}\right],
\vspace{-4pt}
\end{equation}
where $\lambda_1, \lambda_2, ..., \lambda_r$ represent the first $r$ largest eigenvalues of $L_Y^+L_X$ and $v_1, v_2, ..., v_r$ are the corresponding eigenvectors. Let $u_1, u_2, ..., u_N$ denote the $N$  eigenvectors of $L_X L_Y^+$, respectively, while their corresponding eigenvalues are shared.  In addition, eigenvectors ${{u_i}}$ can  be constructed to satisfy:
\begin{equation}\label{formula_p-orth}
{u_i^\top L^+_X u_j^{}} = \left\{ \begin{array}{l}
1, ~~~i = j\\
0, ~~~i \ne j.
\end{array} \right.
\end{equation}
\begin{equation}\label{formula_p-orth2}
\Rightarrow{u_i^\top L^+_Y u_j^{}} = \left\{ \begin{array}{l}
\lambda_i, ~~~i = j\\
0, ~~~i \ne j.
\end{array} \right.
\end{equation}
Therefore, the following equations hold:
 \begin{equation}\label{eigen_dual}
 \begin{split}
 L_Y^+u_i&=\lambda_i L_X^+ u_i  \leftrightarrow L_Y^+L_X \left(L_Y^+u_i\right)=\lambda_i \left(L_Y^+u_i\right)\\ L_X v_i&=\lambda_i L_Y v_i \leftrightarrow L_Y^+L_X v_i=\lambda_i v_i 
\end{split}
\end{equation}
which leads to the following equation
\begin{equation}\label{eigen_dual2}
 \begin{split}
   v_i&=\beta_i L_Y^+ u_i\\ \Rightarrow u^\top_j v_i  &= \left\{ \begin{array}{l}
\beta_i \lambda_i, ~~~i = j\\
0, ~~~i \ne j.
\end{array} \right. 
\end{split}
\end{equation}
where $\beta_i$ denotes a scaling coefficient.
Without loss of generality,  $e_{p,q}$ can be expressed as a linear combination of  $u_i$ for $i=1,...,N$ as follows:
 \begin{equation}\label{formula_epq}
 e_{p,q}=\sum_{i=1}^N \alpha_i u_i.
\end{equation}
Then
$\gamma^F(p,q)$ can be rewritten as follows:
\begin{equation} \label{gammanew}
\begin{split}
\gamma^F(p,q)&=\frac{d_Y(p,q)}{d_X(p,q)}=\frac{e_{p,q}^\top L_Y^+ e_{p,q}}{e_{p,q}^\top L_X^+ e_{p,q}}\\
&=\frac{(\sum_{i=1}^N \alpha_i u_i)^\top L_Y^+(\sum_{i=1}^N \alpha_i u_i) }{(\sum_{i=1}^N \alpha_i u_i)^\top L_X^+(\sum_{i=1}^N \alpha_i u_i)}\\
&=\frac{\sum_{i=1}^N\sum_{j=1}^N \alpha_i\alpha_j u_i^\top L_Y^+u_j }{\sum_{i=1}^N\sum_{j=1}^N \alpha_i\alpha_j u_i^\top L_X^+u_j}\\
&=\frac{\sum_{i=1}^N \alpha^2_i u_i^\top L_Y^+u_i }{\sum_{i=1}^N \alpha^2_i  u_i^\top L_X^+u_i}\\
&=\frac{\sum_{i=1}^N\alpha^2_i\lambda_i}{\sum_{i=1}^N\alpha^2_i}.
\end{split}
\end{equation}

If the edge $({p,q})$  is  dominantly aligned with a single dominant generalized eigenvector $u_k$ where $1\le k \le r$, it  implies  $\forall i\neq k, \alpha_i \approx 0$ and thus $e_{p,q}\approx \alpha_k u_k$. Then:
\begin{equation} \label{gammanew2}
\begin{split}
\gamma^F(p,q)\approx \lambda_k.
\end{split}
\end{equation}
With $\bigl\|\,V_r^\top e_{p,q}\bigr\|_2^2$, We have:
\begin{equation}
\label{the_edge_weigt_1}
\begin{split}
    \|V_r^\top e_{p,q}\|_2^2&=\sum_{i=1}^r \lambda_i (v_i^\top e_{p,q})^2 \\
    &=\sum_{i=1}^r \lambda_i \left(\sum_{j=1}^N \alpha_j \beta_i u_j^\top L_Y^+ u_i \right)^2 \\
    &=\sum_{i=1}^r \alpha^2_i \beta^2_i\lambda^3_i  \\
    &\approx \alpha^2_k \beta^2_k \lambda^3_k \propto \left(\gamma^F(p,q)\right)^3
    \end{split}
\end{equation}

Consider $L_X^+\,L_Y$ whose eigenvalues we denote by $\mu_1,\ldots,\mu_N$ with corresponding eigenvectors $w_1,\ldots,w_N$. we then compute the weighted eigensubspace matrix $W_r\in {\mathbb{R} ^{N\times r}}$ for spectral embedding on $G_Y$ with $N$ nodes:
\vspace{-4pt}
\begin{equation}\label{subspace2}
W_r\overset{\mathrm{def}}{=}\left[ {w_1}{\sqrt {\mu_1}},...,  {w_r}{\sqrt {\mu_r }}\right],
\vspace{-4pt}
\end{equation}

Because \(L_X^+\,L_Y\) has eigenvalues \(\mu_i=1/\lambda_i\), and its eigenvectors \(w_i\) correspond in a reciprocal way, one obtains a parallel statement. In particular:
\begin{itemize}
\item If \(e_{p,q}\) aligns chiefly with the eigenvector \(w_k\) of \(L_X^+\,L_Y\) having eigenvalue \(\mu_k = 1/\lambda_k\),
\item Then \(\gamma^F(p,q)\approx \lambda_k\) as before.
\end{itemize}

A similar calculation to the Equation~\ref{the_edge_weigt_1} proof now yields
\[
  \bigl\|\,W_r^\top e_{p,q}\bigr\|_2^2
  \;=\;\sum_{i=1}^r \mu_i\,\bigl(w_i^\top e_{p,q}\bigr)^2
  \;\approx\;
  \text{const}\,\times (\mu_k)^{3}
  \;=\;
  \text{const}\,\times \Bigl(\tfrac{1}{\lambda_k}\Bigr)^3.
\]

Since \(\lambda_k\approx \gamma^F(p,q)\), we conclude
\[
\bigl\|\,W_r^\top e_{p,q}\bigr\|_2^2
\propto\;\gamma^F(p,q)^{-3}.
\]

Hence, $\bigl\|\,W_r^\top e_{p,q}\bigr\|_2^2 + \bigl\|\,V_r^\top e_{p,q}\bigr\|_2^2 \propto \gamma^F(p,q)^3+\gamma^F(p,q)^{-3}$.

\subsection{Experimental Setup}
\label{sec:appendix_exp_setup}

In this section, we provide details on the datasets, model configurations, training/finetuning protocols, and evaluation metrics used throughout our experiments. By clarifying each step, we ensure that our methodology is both \emph{transparent} and \emph{reproducible}.

\paragraph{Dataset.}
We evaluate on benchmark datasets such as \textsc{SST-2}, \textsc{MNLI}, \textsc{RTE}, \textsc{QNLI}, \textsc{QQP}, and \textsc{CoLA} to cover diverse classification objectives (sentiment analysis, natural language inference, and question classification). Each dataset is split into training, validation, and test sets following standard protocols (e.g., the GLUE benchmark~\citep{wang2018glue}). We tokenize inputs using the \emph{default} subword tokenizer for each model (e.g., BERT’s WordPiece or RoBERTa’s Byte-Pair Encoding), lowercasing as necessary. For SALMAN-Guided Attack experiment, we use AdvBench Harmful Behaviors dataset. JailBreak does not involve a training process, thus we did not split the dataset. We directly ranked the entire dataset of 520 data points.

\paragraph{Language Model.}
We evaluate on several benchmark language models such as BERT-base-uncased~\cite{devlin2018bert}, RoBERTa-base~\cite{liu2019roberta}, DistilBERT-base-uncased~\cite{sanh2019distilbert}, ALBERT-base-v2~\cite{lan2019albert}, GPT-2~\cite{radford2019language}, and LLaMA-7B-v2~\cite{touvron2023llama}.

\paragraph{Hyperparameter settings.}
We obtain \emph{Distance Mapping Distortion} scores for each sample by comparing input and output manifold distances (e.g., from \(\mathbf{z}_X\) to \(\mathbf{z}_Y\)). Summary of hyperparameters during DMD calculation is in Table~\ref{tab:hparams_summary}. To gauge how much finetuned models deviate from their pretrained checkpoints, we reference layer-wise similarity metrics such as CKA and STIR~\citep{neerudu2023robustness}.

\begin{table}[t]
\centering
\small
\caption{Summary of hyperparameters (\(k\) is for kNN graph construction and SPF is for our low-rresistance-diameter decomposition) used in our method for each \textit{(model, attack)} configuration. 
DIS refers to random selection of deletion, insertion, or swap.}
\label{tab:hparams_summary}
\begin{tabular}{l l c c}
\toprule
Model & Attack & \(k\) & \(SPF\) \\
\midrule
\multicolumn{4}{c}{\emph{SST-2}} \\
\midrule
BERT-base-uncased       & DIS        & 30  & 2 \\
RoBERTa-base            & DIS        & 30  & 2 \\
DistilBERT-base-uncased & DIS        & 30  & 2 \\
ALBERT-base-v2          & DIS        & 30  & 2 \\
GPT-2                   & DIS        & 10  & 2 \\
LLaMA-7B-v2             & DIS        & 10  & 2 \\
GPT-2                   & spaCy      & 20  & 2 \\
LLaMA-7B-v2             & spaCy      & 30  & 3 \\
GPT-2                   & TextAttack & 10  & 2 \\
LLaMA-7B-v2             & TextAttack & 10  & 2 \\
\midrule
\multicolumn{4}{c}{\emph{MNLI}} \\
\midrule
BERT-base-uncased       & DIS        & 30  & 2 \\
RoBERTa-base            & DIS        & 30  & 2 \\
DistilBERT-base-uncased & DIS        & 30  & 2 \\
ALBERT-base-v2          & DIS        & 30  & 2 \\
GPT-2                   & DIS        & 50  & 2 \\
LLaMA-7B-v2             & DIS        & 10  & 2 \\
GPT-2                   & spaCy      & 70 & 3 \\
LLaMA-7B-v2             & spaCy      & 70  & 2 \\
GPT-2                   & TextAttack & 20  & 3 \\
LLaMA-7B-v2             & TextAttack & 10  & 2 \\
\bottomrule
\end{tabular}
\end{table}

k-NN Ablation on SST-2 (GPT-2). 
We also evaluate the sensitivity of SALMAN to the choice of \(k\) in \(k\)-NN graph construction. Specifically, we vary \(k \in \{15, 20, 30\}\) for GPT-2 on SST-2 and compare the \emph{Kullback--Leibler Divergence} (KLD) and \emph{BERTScore} (Precision, Recall, \(F_{1}\)) for non-robust (NR) vs.\ robust (R) samples. As shown in Table~\ref{tab:knn_ablation}, increasing \(k\) does not drastically alter the distinction between robust and non-robust data; the non-robust subsets consistently exhibit higher KLD and slightly lower BERTScores, while robust subsets remain more stable under perturbations. This indicates that SALMAN is relatively insensitive to moderate changes in \(k\).

\begin{table}[h]
  \centering
  \caption{Effect of varying \(k\) in \(k\)-NN on robustness and similarity metrics (GPT-2, SST-2).}
  \label{tab:knn_ablation}
  \resizebox{\columnwidth}{!}{%
  \begin{tabular}{c c c c c c c c c}
    \toprule
    $k$ (kNN) & KLD (NR) & KLD (R) & Precision (NR) & Recall (NR) & $F_{1}$ (NR) & Precision (R) & Recall (R) & $F_{1}$ (R) \\
    \midrule
    15 & 0.1110 & 0.0000 & 0.9972 & 0.9978 & 0.9975 & 0.9990 & 0.9988 & 0.9989 \\
    20 & 0.0988 & 0.0003 & 0.9973 & 0.9978 & 0.9976 & 0.9993 & 0.9992 & 0.9992 \\
    30 & 0.1923 & 0.0000 & 0.9962 & 0.9970 & 0.9966 & 0.9992 & 0.9992 & 0.9992 \\
    \bottomrule
  \end{tabular}
  }
\end{table}

\subsection{Layer-wise STIR and CKA Results}
\label{sec:appendix_gpt2_sst2_rte}

In addition to the layer-wise comparison between normal and guided fine-tuning shown in Table~\ref{tab:gpt2_cola_small} (CoLA dataset), we replicate the same analysis for the \textbf{SST-2} and \textbf{RTE} tasks under GPT-2. Following the exact protocol of Section~\ref{subsec:guided-finetuning} and \citet{neerudu2023robustness}, we assign higher training weights to non-robust data (determined by our DMD ranking) and lower weights to robust data. As before, we measure:
\begin{itemize}
    \item \textbf{Validation Accuracy} on the downstream task,
    \item \textbf{STIR} (Similar Token Identity Representation) metrics \texttt{(m2m1, m1m2)} capturing how similar layer $i$ in the fine-tuned model $m2$ is to layer $j$ in the pre-trained model $m1$,
    \item \textbf{CKA} measuring layer-wise alignment between $m1$ and $m2$ embeddings.
\end{itemize}

\vspace{1em}
\noindent
\textbf{SST-2 Results.} Table~\ref{tab:gpt2_sst2_small} shows GPT-2’s layer-wise STIR and CKA under normal vs.\ guided fine-tuning on SST-2. Both approaches yield similar \emph{final accuracy} (0.9231 vs.\ 0.9232), yet the guided variant consistently achieves higher STIR/CKA scores in later layers. In particular, layer~12 sees a substantial jump in STIR(\texttt{m2m1}) from 0.0533 to 0.0867 and CKA from 0.1459 to 0.2039, indicating closer alignment to the pre-trained checkpoint.

\vspace{1em}
\noindent
\textbf{RTE Results.} In Table~\ref{tab:gpt2_rte_small}, we compare normal vs.\ guided fine-tuning for GPT-2 on the RTE dataset. While both runs converge similarly in accuracy (not shown here to save space), the guided approach again shows improved STIR and CKA alignment with the pre-trained checkpoint. For instance, layer~12 sees an increase from 0.2858 to 0.3393 in STIR(\texttt{m2m1}) and from 0.3458 to 0.3476 in CKA.

\begin{table}[t]
\centering
\small
\caption{Layer-wise STIR and CKA for GPT-2 on CoLA. 
Each cell shows the ``Normal fine-tuning / SALMAN-guided fine-tuning'', rounded to four decimal places. Normal fine-tuning validation accuracy is $0.7468$, SALMAN-guided fine-tuning validation accuracy is $0.7709$. $m1$ is the pre-trained model and $m2$ is the fine-tuned model. Better results are in \textbf{bold}.}
\label{tab:gpt2_cola_small}
% \resizebox{\columnwidth}{!}{%
\begin{tabular}{c c c c}
\toprule
Layer & STIR(m2m1) & STIR(m1m2) & CKA\\
\midrule
0  & 0.9623 / \textbf{0.9623} & 0.9623 / \textbf{0.9623} & 1.0000 / \textbf{1.0000}\\
1  & 0.9070 / \textbf{0.9073} & \textbf{0.9065} / 0.9053 & 0.9987 / \textbf{0.9987}\\
2  & 0.9688 / \textbf{0.9691} & 0.9690 / \textbf{0.9711} & \textbf{0.9936} / 0.9931\\
3  & 0.9848 / \textbf{0.9904} & \textbf{0.9678} / 0.9551 & \textbf{0.9856} / 0.9748\\
4  & 0.9853 / \textbf{0.9934} & 0.9690 / \textbf{0.9775} & 0.9836 / \textbf{0.9837}\\
5  & 0.9904 / \textbf{0.9928} & 0.9750 / \textbf{0.9752} & 0.9906 / \textbf{0.9909}\\
6  & 0.9897 / \textbf{0.9924} & 0.9697 / \textbf{0.9767} & 0.9920 / \textbf{0.9945}\\
7  & 0.9895 / \textbf{0.9927} & 0.9724 / \textbf{0.9833} & \textbf{0.9931} / 0.9909\\
8  & 0.9860 / \textbf{0.9914} & 0.9680 / \textbf{0.9854} & 0.9923 / \textbf{0.9936}\\
9  & 0.9825 / \textbf{0.9872} & 0.9666 / \textbf{0.9770} & 0.9905 / \textbf{0.9907}\\
10 & 0.9776 / \textbf{0.9833} & 0.9647 / \textbf{0.9762} & 0.9917 / \textbf{0.9928}\\
11 & 0.9730 / \textbf{0.9784} & 0.9628 / \textbf{0.9678} & 0.9893 / \textbf{0.9904}\\
12 & 0.4691 / \textbf{0.7233} & 0.6819 / \textbf{0.7924} & 0.5612 / \textbf{0.7251}\\
\bottomrule
\end{tabular}%
% }
\end{table}

\begin{table}[h!]
\centering
\small
\caption{Layer-wise STIR and CKA for GPT-2 on \textbf{RTE} under Normal vs.\ Guided fine-tuning. 
Each cell shows ``Normal Fine-tuning /  SALMAN-guided Fine-tuning'', rounded to four decimal places.
Better results in \textbf{bold}.}
\label{tab:gpt2_rte_small}
\begin{tabular}{c c c c}
\toprule
\textbf{Layer} & \textbf{STIR(m2m1)} & \textbf{STIR(m1m2)} & \textbf{CKA}\\
\midrule
0  & 0.9913 / \textbf{0.9913} & 0.9914 / \textbf{0.9914} & 1.0000 / \textbf{1.0000}\\
1  & 0.9786 / \textbf{0.9791} & 0.9776 / \textbf{0.9779} & \textbf{0.9986} / 0.9977\\
2  & \textbf{0.9859} / 0.9857 & \textbf{0.9859} / 0.9852 & 0.9976 / \textbf{0.9990}\\
3  & 0.9903 / \textbf{0.9920} & 0.9917 / \textbf{0.9918} & 0.9951 / \textbf{0.9987}\\
4  & 0.9897 / \textbf{0.9902} & 0.9897 / \textbf{0.9900} & 0.9885 / \textbf{0.9963}\\
5  & 0.9898 / \textbf{0.9908} & 0.9891 / \textbf{0.9916} & 0.9894 / \textbf{0.9981}\\
6  & 0.9865 / \textbf{0.9872} & 0.9869 / \textbf{0.9886} & 0.9908 / \textbf{0.9923}\\
7  & 0.9821 / \textbf{0.9829} & 0.9806 / \textbf{0.9839} & 0.9746 / \textbf{0.9801}\\
8  & 0.9758 / \textbf{0.9781} & 0.9709 / \textbf{0.9761} & 0.9407 / \textbf{0.9500}\\
9  & 0.9708 / \textbf{0.9724} & 0.9607 / \textbf{0.9696} & 0.9288 / \textbf{0.9492}\\
10 & 0.9564 / \textbf{0.9601} & 0.9359 / \textbf{0.9507} & 0.9028 / \textbf{0.9347}\\
11 & \textbf{0.9333} / 0.9331 & 0.9152 / \textbf{0.9265} & 0.9223 / \textbf{0.9390}\\
12 & 0.2858 / \textbf{0.3393} & 0.6131 / \textbf{0.6203} & 0.3458 / \textbf{0.3476}\\
\bottomrule
\end{tabular}
\end{table}

\begin{table}[h!]
\centering
\small
\caption{Layer-wise STIR and CKA for GPT-2 on \textbf{SST-2} under Normal vs.\ Guided fine-tuning.  
Each cell shows ``Normal Fine-tuning / Guided Fine-tuning'', rounded to four decimal places.  
\textbf{Acc} is the validation accuracy of each method. 
For STIR, $(\texttt{m2m1})$ compares the fine-tuned model $m2$ to the pre-trained model $m1$, and $(\texttt{m1m2})$ is the reverse; CKA measures embedding similarity. 
Better results are in \textbf{bold}.}
\label{tab:gpt2_sst2_small}
\begin{tabular}{c c c c}
\toprule
\multicolumn{4}{c}{\textbf{Validation Accuracy: } Normal = 0.9231,\quad  SALMAN-guided = 0.9232} \\
\midrule
\textbf{Layer} & \textbf{STIR(m2m1)} & \textbf{STIR(m1m2)} & \textbf{CKA}\\
\midrule
0  & 0.9913 / \textbf{0.9913} & 0.9912 / \textbf{0.9912} & 1.0000 / \textbf{1.0000}\\
1  & 0.9763 / \textbf{0.9771} & 0.9784 / \textbf{0.9787} & 0.9963 / \textbf{0.9974}\\
2  & 0.9784 / \textbf{0.9789} & 0.9762 / \textbf{0.9767} & 0.9971 / \textbf{0.9973}\\
3  & 0.9703 / \textbf{0.9713} & 0.9366 / \textbf{0.9414} & 0.9460 / \textbf{0.9469}\\
4  & 0.9661 / \textbf{0.9715} & 0.9549 / \textbf{0.9608} & 0.9773 / \textbf{0.9800}\\
5  & 0.9736 / \textbf{0.9757} & 0.9469 / \textbf{0.9589} & 0.9705 / \textbf{0.9738}\\
6  & 0.9649 / \textbf{0.9704} & 0.9343 / \textbf{0.9450} & 0.9568 / \textbf{0.9604}\\
7  & 0.9618 / \textbf{0.9672} & 0.9389 / \textbf{0.9476} & 0.9642 / \textbf{0.9675}\\
8  & 0.9663 / \textbf{0.9703} & 0.9514 / \textbf{0.9598} & 0.9800 / \textbf{0.9825}\\
9  & 0.9435 / \textbf{0.9553} & 0.9473 / \textbf{0.9545} & 0.9717 / \textbf{0.9787}\\
10 & 0.9230 / \textbf{0.9504} & 0.9486 / \textbf{0.9573} & 0.9599 / \textbf{0.9774}\\
11 & 0.8567 / \textbf{0.9208} & 0.9328 / \textbf{0.9426} & 0.9166 / \textbf{0.9562}\\
12 & 0.0533 / \textbf{0.0867} & 0.7504 / \textbf{0.7755} & 0.1459 / \textbf{0.2039}\\
\bottomrule
\end{tabular}
\end{table}

\vspace{1em}
\noindent
\textbf{Discussion.}
Similar to our observations on \textsc{CoLA} (Table~\ref{tab:gpt2_cola_small}), placing higher emphasis on non-robust data (i.e., higher DMD samples) preserves downstream performance while bringing the fine-tuned layers closer to the original pre-trained representations. These improvements in STIR and CKA suggest \emph{reduced representational drift}, consistent with the intuition that focusing on “hard” samples forces the model to retain more generalizable features from pre-training~\citep{cheng2021spade,zhu2023improving}.

Overall, these extended results on SST-2 and RTE corroborate our main findings: \emph{robustness-guided fine-tuning} effectively balances task performance with better alignment to the pre-trained checkpoint across multiple datasets.

\subsection{Weighted Fine-Tuning and Integration with ROSE}
\label{sec:appendix_weighting}

\paragraph{Motivation: Focus on Non-Robust Data.}
As discussed in Section~\ref{subsec:guided-finetuning}, prior studies have shown that directing more attention to non-robust (``hard'') samples during training can improve model generalizability and resilience~\citep{cheng2021spade,zhu2023improving}. Our approach identifies these difficult samples via the  SALMAN-based ranking and then \emph{assigns higher training weights} to them, while simultaneously down-weighting samples that appear robust. We follow the finetuning protocol of \citet{neerudu2023robustness}, hypothesizing that emphasizing harder samples preserves more of the pre-trained model’s versatility. This reduces the risk of overfitting to “easy” data and yields representations closer to the original checkpoint (see STIR/CKA results in Appendix~\ref{sec:appendix_gpt2_sst2_rte}).

\subsubsection{Weighting Schemes for Guided Fine-Tuning}
\textbf{Linear Schedule.}  
We sort all training samples in descending order of their DMD values (highest DMD = most non-robust), then map each sample to a weight $w\in[0,1]$ proportional to its position in this ranking. Concretely, if the highest-DMD sample is indexed as rank $0$, it receives weight \(\approx1.0\), whereas the lowest-DMD sample (rank $n{-}1$) receives weight near $0.0$. Intermediate samples smoothly interpolate between these extremes.

\paragraph{Combining with SOTA Robust Training (ROSE).}
We further integrate our DMD-based weighting into \textbf{ROSE: Robust Selective Fine-tuning}~\citep{jiang2022rose}, which filters out spurious parameter updates by comparing dropout-induced distributions at each iteration:
\[
L_{\mathrm{KL}}^{(t)} \;=\; D_{\mathrm{KL}}(P_t\;\|\;P'_t) \;+\; D_{\mathrm{KL}}(P'_t\;\|\;P_t).
\]
ROSE removes parameter changes that inflate $L_{\mathrm{KL}}$ excessively, thus improving adversarial resilience.

\textbf{Per-sample Weight $w(x)$ for Joint Optimization}
Alternatively, we employ a logistic transition-based function partitioned into intervals:
\begin{itemize}
    \item \textbf{Top-25\% Non-Robust} (i.e., highest DMD) can receive a \emph{larger} weight (e.g., $2.0$),
    \item \textbf{Middle-Range} samples gradually decrease from $1.0$ to $0.0$ in stepwise logistic transitions,
    \item \textbf{Bottom-5\% Most Robust} eventually gets weight $0.0$ (or near-zero).
\end{itemize}
This piecewise approach allows a \emph{finer distinction} between very hard, moderately challenging, and trivially easy samples.

\noindent
\textbf{Joint Optimization.}  
We incorporate our per-sample weight $w(x)$ into ROSE’s fine-tuning loss. Specifically, if the original ROSE objective is
\[
\mathcal{L}_{\mathrm{ROSE}}(\theta_t) 
\;=\;
\mathbb{E}_{x \sim \mathcal{D}}\bigl[L_{\mathrm{task}}(x, \theta_t) + \lambda L_{\mathrm{KL}}^{(t)}\bigr],
\]
then our \emph{combined} objective is
\[
\mathcal{L}_{\mathrm{ROSE+Guided}}(\theta_t) 
\;=\;
\mathbb{E}_{x \sim \mathcal{D}}\Bigl[
\,w(x)\,\cdot L_{\mathrm{task}}(x, \theta_t)
+\;\lambda\,L_{\mathrm{KL}}^{(t)}\Bigr].
\]
Hence, the model is “selective” not only at the \emph{parameter} level (via $L_{\mathrm{KL}}$) but also at the \emph{sample} level (via DMD-based weighting).

\subsection{From PGM to Manifold: Validating on Graph Benchmarks}
\label{sec:appendix_pgm_manifold}

Although our primary interest is applying the PGM-based manifold to NLP data (where nodes represent text embeddings), we first validate how well our spectral sparsification and resistance distance preservation works on \emph{standard graph benchmarks}, namely \textbf{Cora}, \textbf{Citeseer}, and \textbf{PolBlogs}. These datasets are widely used in the GNN literature and offer:
\begin{itemize}
    \item \textbf{Well-defined adjacency}: Each graph provides a clear baseline for measuring changes in effective resistance.
    \item \textbf{Known benchmarks for graph-based algorithms}: This allows direct comparison of spectral or manifold-like approaches without the additional complexity of NLP text embedding.
\end{itemize}
In other words, while our ultimate goal is to build a \emph{manifold} for robustness analysis in transformer-based language models, these classic graph datasets serve as an \emph{intermediary check} to confirm that the PGM manifold indeed preserves \emph{resistance distances} in large-scale graphs.

\vspace{5pt}
\noindent
\textbf{Why Graph Benchmarks Instead of NLP Data?}
\begin{itemize}
    \item \emph{Ground-Truth Adjacency}:
    For cora/citeseer/polblogs, the adjacency matrix is explicitly available, enabling a direct before/after comparison of edge sparsity and distance correlation. In contrast, NLP data initially lacks a clear “graph,” so we must approximate edges (e.g., via $k$-NN). Verifying the correctness of our approach on well-studied graph datasets ensures that the spectral sparsification steps \emph{properly} preserve distances.
    \item \emph{Easier Resistance Verification}:
    By default, each node in these graph benchmarks is associated with a known set of neighbors. We can compute full-pairwise effective resistance or measure Pearson, Spearman, and MSE between original and sparsified graphs (Table~\ref{tab:graph_spf_summary}). This level of straightforward measurement is less trivial in NLP tasks, where adjacency depends on embedding similarity.
\end{itemize}

\noindent
\textbf{Experiment Setup.}
\begin{enumerate}
    \item \textbf{Compute original resistance distances} for each pair of nodes in the unsparsified graph.
    \item \textbf{Apply our SPF (Spectral Pruning via effective-resistance)} procedure at various parameters (e.g., $\texttt{param} \in \{2,3,4\}$), generating a pruned graph that discards edges with smaller distance ratios.
    \item \textbf{Quantify distance preservation} via Pearson correlation, Spearman correlation, MSE, and relative error (\texttt{RelErr}) between the original and the pruned graph’s resistance distances.
    \item \textbf{Measure final edge count} as a fraction of the original adjacency size.
\end{enumerate}

\begin{table}[t]
\centering
\caption{SPF results on three datasets (Cora, Citeseer, Polblogs). 
For each dataset, we vary the SPF parameter in \{2, 3, 4\}, 
then measure how well the transformed adjacency preserves the original resistance distances 
(Pearson / Spearman correlation, MSE, relative error). 
``Edges\%'' indicates the proportion of edges retained relative to the original graph.}
\label{tab:graph_spf_summary}
\begin{tabular}{l c c c c c c}
\toprule
\textbf{Dataset} & \textbf{SPF} & \textbf{Pearson} & \textbf{Spearman} & \textbf{MSE} & \textbf{RelErr} & \textbf{Edges\%}\\
\midrule
\textbf{cora}     & 2 & 0.9029 & 0.8899 & 0.58045 & 0.3511 & 80.29\% \\
\textbf{cora}     & 3 & 0.8602 & 0.8495 & 1.01185 & 0.5178 & 74.51\% \\
\textbf{cora}     & 4 & 0.8113 & 0.7988 & 1.76080 & 0.7074 & 70.21\% \\
\midrule
\textbf{citeseer} & 2 & 0.9475 & 0.9475 & 0.89848 & 0.2463 & 80.48\% \\
\textbf{citeseer} & 3 & 0.9220 & 0.9190 & 1.67925 & 0.3658 & 75.71\% \\
\textbf{citeseer} & 4 & 0.9074 & 0.9014 & 2.46463 & 0.4674 & 72.25\% \\
\midrule
\textbf{polblogs} & 2 & 0.9565 & 0.9693 & 0.02916 & 0.3209 & 67.58\% \\
\textbf{polblogs} & 3 & 0.9090 & 0.9356 & 0.07819 & 0.6778 & 53.19\% \\
\textbf{polblogs} & 4 & 0.8323 & 0.8696 & 0.20026 & 1.4342 & 37.03\% \\
\bottomrule
\end{tabular}
\end{table}

\vspace{5pt}
\noindent
\textbf{Results and Analysis.}
Table~\ref{tab:graph_spf_summary} summarizes the outcomes on \textbf{Cora}, \textbf{Citeseer}, and \textbf{PolBlogs}. For each dataset:
\begin{itemize}
    \item \textbf{Pearson \& Spearman correlation} remain high ($>0.80$) even when we prune roughly $20\text{-}40\%$ of the edges, confirming that the principal global and local distance structures remain intact.
    \item \textbf{MSE and RelErr} naturally increase with more aggressive pruning, yet remain within acceptable ranges for many use-cases (e.g., GNN training, manifold-based clustering).
    \item \textbf{Sparsification Rate} (\texttt{Edges\%}) indicates that by increasing the SPF parameter, we can achieve increasingly compact graphs without catastrophically degrading the resistance-distance correlation.
\end{itemize}
In short, these results validate that our spectral-pruning approach effectively maintains key \emph{manifold properties} (represented by resistance distances) across standard graph benchmarks. By extension, we expect similar fidelity in large-scale NLP tasks once we construct an initial $k$-NN or adjacency graph from text embeddings.

Having verified the correctness of our PGM manifold construction on well-known graph datasets, we now apply the same principles (near-linear spectral sparsification plus Laplacian-based $\Theta$ construction) to build manifolds for high-dimensional text embeddings. This ensures that the subsequent distance analyses in our transformer robustness framework rely on an \emph{accurate} and \emph{scalable} manifold, preserving essential local and global distances just as effectively as in these classic graph scenarios.

\subsection{Effective Resistance Distance}
\label{sec:appendix_eff_resistance}

\paragraph{Motivation and Intuition.}
In graph-based methods, the \emph{effective resistance distance} (also called \emph{resistance distance} in electrical-network parlance) provides a powerful metric for understanding the relationship between pairs of nodes. Unlike simple shortest-path lengths, effective resistance captures both local and global connectivity: if two nodes are connected by many parallel paths, they have lower effective resistance than nodes primarily joined by a single, bottleneck path \citep{spielman2011spectral}.

\paragraph{Electrical Network Interpretation.}
One way to grasp effective resistance is to imagine placing a 1-Ohm resistor on each edge of the graph and then viewing the entire graph as an electrical circuit:
\begin{itemize}
    \item Inject 1 amp of current into node $u$ and extract it from node $v$.
    \item Let $\varphi(x)$ be the resulting electrical potential at any node $x$ in the network.
    \item The \emph{effective resistance distance} $R_{\mathrm{eff}}(u,v)$ is then \emph{the voltage difference} between $u$ and $v$, i.e., $\varphi(u)-\varphi(v)$, required to sustain that 1-amp current.
\end{itemize}
Thus, if there are many alternative routes (parallel edges) from $u$ to $v$, the network offers “lower resistance” between them, indicating $u$ and $v$ are closely tied in the graph’s connectivity structure \citep{chandra1989electrical, ellens2011effective}.

\paragraph{Mathematical Formulation via Laplacian Pseudoinverse.}
Let $G=(V,E)$ be an undirected, connected graph with $n=|V|$ nodes. 
Denote its \emph{Laplacian matrix} by $L_G = D - W$, where $D$ is the diagonal degree matrix and $W$ is the adjacency (or edge-weight) matrix. 
Since $L_G$ is positive semidefinite and has rank $n-1$ for a connected graph, it admits a Moore-Penrose pseudoinverse $L_G^+$ \citep{mohar2004graph,spielman2011spectral}. 
For nodes $p$ and $q$:
\[
\textstyle
R_{\mathrm{eff}}(p,q) 
\;=\;
(e_p - e_q)^\top \,L_G^+\, (e_p - e_q),
\]
where $e_p$ is the standard basis vector (all zeros except a 1 in the $p$-th coordinate). Intuitively, $L_G^+$ encodes global connectivity, so $R_{\mathrm{eff}}(p,q)$ measures “how difficult it is to flow current” from $p$ to $q$ across $G$ \citep{babic2002resistance}.

\paragraph{Example: Line Graph vs.\ Square Graph.}
To illustrate how the \emph{effective resistance} distance can differ substantially from the naive (hop-count) distance, consider:

\begin{itemize}
    \item \textbf{Line Graph with 3 Nodes} \(\{1,2,3\}\) and unit-weight edges \((1,2)\) and \((2,3)\). The hop distance from node 1 to node 3 is \(2\). When modeled as a resistor network, each edge contributes 1 ohm in series; thus, the effective resistance between node 1 and node 3 is 
    \[
    R_{\mathrm{eff}}(1,3) \;=\; 1 + 1 \;=\; 2.
    \]
    \item \textbf{Square Graph with 4 Nodes} \(\{1,2,3,4\}\) and edges \((1,2)\), \((2,3)\), \((3,4)\), \((4,1)\), each of unit weight. 
    The naive (hop) distance from node 1 to node 3 is \(2\) (e.g., via \(1 \to 2 \to 3\) or \(1 \to 4 \to 3\)). However, in the resistor-network view, there are two distinct 2-edge paths running in \emph{parallel} between node 1 and node 3:
    \[
        1\!\to\!2\!\to\!3
        \quad\text{and}\quad
        1\!\to\!4\!\to\!3.
    \]
    Each path alone would have resistance \(1+1 = 2\). Because they are in parallel, the total effective resistance is 
    \[
    R_{\mathrm{eff}}(1,3) \;=\; \Bigl(\frac{1}{2} + \frac{1}{2}\Bigr)^{-1} \;=\; 1.
    \]
\end{itemize}

\begin{figure}[ht]
\centering
\includegraphics[width=0.6\textwidth]{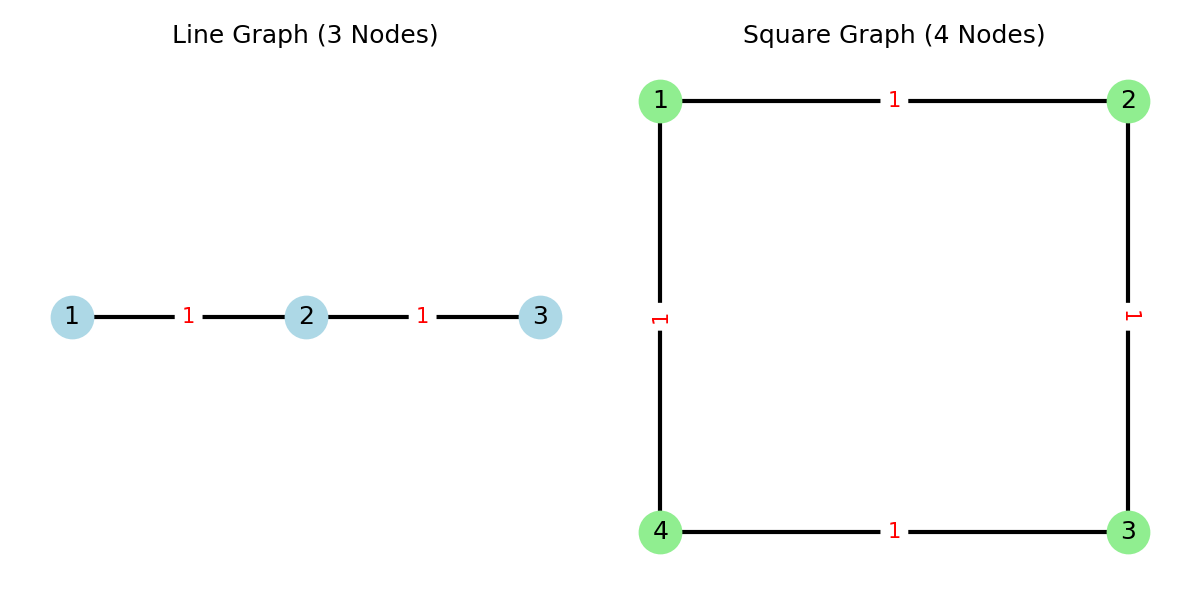}
\caption{\textbf{Line vs.\ Square Graph Examples.} 
\emph{(Left)} The line graph with nodes \(\{1,2,3\}\). 
\emph{(Right)} The square graph with nodes \(\{1,2,3,4\}\). 
Although both have a hop distance of 2 between node 1 and node 3, 
the \emph{effective resistance} differs significantly: 
it is \(R_{\mathrm{eff}}(1,3)=2\) in the line graph (two edges in series), 
versus \(R_{\mathrm{eff}}(1,3)=1\) in the square graph 
(two parallel 2-edge paths).}
\label{fig:line_square_comparison}
\end{figure}

These simple examples illustrate that the effective resistance distance may diverge from the naive, purely local distance. For node pairs in a graph with parallel paths, the effective resistance is often smaller than the hop count would suggest. By contrast, if all paths between two nodes lie strictly in series (as in a line graph), the effective resistance grows as a sum of edge resistances. Such distinctions are at the heart of why resistance-based metrics can better capture global connectivity and structural nuances in graph-based manifold analysis.

\paragraph{Relevance to Robustness and Graph Learning.}
The notion of effective resistance has become increasingly relevant for:
\begin{itemize}
    \item \textbf{Spectral Graph Sparsification}: 
    Low- and high-resistance edges are treated differently; small-resistance edges indicate redundancy, enabling fast approximation algorithms \citep{spielman2011graph, spielman2011spectral}.
    \item \textbf{Commute/Random Walk Times}:
    $R_{\mathrm{eff}}(p,q)$ also relates to the expected commute time of a random walk between $p$ and $q$ \citep{chandra1989electrical}, linking local connectivity to global diffusion properties.
    \item \textbf{Manifold Preserving Embeddings}:
    By preserving effective resistance distances, one can maintain both local neighborhoods and global circuit-like structure in a final embedding or graph model \citep{ellens2011effective, feng2021sgl}.
\end{itemize}
In short, effective resistance unifies local and global connectivity aspects, making it ideal for measuring how perturbations might propagate through a network—and by extension, how to keep the manifold structure stable in large-scale data (e.g., NLP embeddings).

\subsection{Empirical Evidence of \texorpdfstring{$(\gamma^F_{\min})^{-1}$}{Lg} Capturing “Collapses”}
\label{sec:appendix_bilip_empirical}

In Section~\ref{subsec:dmd-calculation}, we highlighted how a large $(\gamma^F_{\min})^{-1}$ indicates another dimension of fragility: \emph{distant} inputs becoming overly close in the output space. Below, we provide empirical results on multiple model--task combinations, measuring:
\begin{itemize}
    \item \textbf{Cosine Similarity (Cos)} between original vs.\ perturbed embeddings, 
    \item \textbf{KL Divergence (KLD)} between output distributions,
    \item for both \textit{non-robust} vs.\ \textit{robust} samples, under A: $\gamma^F_{\max}$ or B: $\gamma^F_{\max}+(\gamma^F_{\min})^{-1}$ setting.
\end{itemize}
A significant gap in Cos or KLD between robust and non-robust samples suggests the model \emph{amplifies} small differences in the non-robust subset (or “collapses” large differences). Conversely, if robust samples remain stable, it aligns with a lower distortion (or higher $\gamma^F_{\min}$).

\begin{table}[h]
\centering
\small
\caption{Comparisons of Cosine Similarity (\texttt{Cos}) and KL Divergence (\texttt{KLD}) across \textit{non-robust} vs.\ \textit{robust} subsets, under \textbf{A: $\gamma^F_{\max}$ or B: $\gamma^F_{\max}+(\gamma^F_{\min})^{-1}$} setting. Selected samples are attacked by spaCy. Each row shows: (1)~model+dataset, (2)(3)~Non-robust Cos, (4)(5)~Robust Cos, (6)(7)~Non-robust KLD, (8)(9)~Robust KLD. Higher Cos / lower KLD typically indicates more stable behavior. Better results are in \textbf{bold}.}
\label{tab:dmd_bilip_empirical}
\begin{tabular}{lcccccccc}
\toprule
\textbf{Model + Task} & \multicolumn{2}{c}{\textbf{Non-rob Cos}} & \multicolumn{2}{c}{\textbf{Rob Cos}} & \multicolumn{2}{c}{\textbf{Non-rob KLD}} & \multicolumn{2}{c}{\textbf{Rob KLD}}\\
\cmidrule(lr){2-3}\cmidrule(lr){4-5}\cmidrule(lr){6-7}\cmidrule(lr){8-9}
 & A & B & A & B & A & B & A & B \\
\midrule
\textbf{BERT, RTE}  & 0.9194 & \textbf{0.9091} & 0.9282 & \textbf{0.9407} & 0.00794 & \textbf{0.00884} & 0.00709 & \textbf{0.00605} \\
\textbf{BERT, SST-2} & 0.9368 & \textbf{0.9358} & 0.9968 & \textbf{0.9969} & 0.00631 & \textbf{0.00647} & 0.00033 & \textbf{0.00032} \\
\textbf{GPT-2, RTE} & 0.9755 & \textbf{0.9662} & 0.9844 & \textbf{0.9917} & 0.01992 & \textbf{0.01992} & 1.14e-13 & \textbf{9.44e-14} \\
\textbf{GPT-2, SST-2} & 0.9730 & \textbf{0.9634} & 0.9989 & \textbf{0.9988} & 0.12331 & \textbf{0.15453} & 2.21e-06 & \textbf{4.38e-07} \\
\textbf{LLaMA-7Bv2, RTE} & 0.9511 & \textbf{0.9438} & 0.9537 & \textbf{0.9582} & 0.6998 & \textbf{0.7733} & 0.6764 & \textbf{0.4797} \\
\textbf{LLaMA-7Bv2, SST-2} & \textbf{0.9490} & 0.9491 & 0.9777 & \textbf{0.9779} & \textbf{0.53032} & 0.52974 & 0.21646 & \textbf{0.17981} \\
\bottomrule
\end{tabular}
\end{table}

\vspace{1em}
\noindent
\textbf{Observations.}

Empirically, when we \emph{combine} both $\gamma^F_{\max}$ and $(\gamma^F_{\min})^{-1}$ (e.g., by ranking samples via $\gamma^F_{\max} + (\gamma^F_{\min})^{-1}$), we obtain a more accurate partition of robust vs.\ non-robust data than using $\gamma^F_{\max}$ alone. 
Specifically:
\begin{itemize}
    \item \textbf{Robust subset} selected by $\bigl[\gamma^F_{\max} + (\gamma^F_{\min})^{-1}\bigr]$ displays \emph{higher} cosine similarity and \emph{lower} KLD relative to a purely $\gamma^F_{\max}$-based choice, 
    \item \textbf{Non-robust subset} exhibits \emph{lower} cosine similarity and \emph{higher} KLD, indicating stronger local instability.
\end{itemize}
This confirms that jointly considering \emph{expansions} ($\gamma^F_{\max}$) and \emph{collapses} ($(\gamma^F_{\min})^{-1}$) provides a more fine-grained characterization of model robustness—reinforcing the notion that both extremes of the distortion spectrum matter for local manifold analysis.

\subsection{Scalability and Efficiency}
\label{subsec:scalability_efficiency}

Table~\ref{tab:efficiency_runtimes} reports total wall-clock time (in seconds) for \emph{embedding the dataset}, \emph{constructing the manifold graph}, and \emph{computing DMD} on standard hardware. Notably, even the largest GLUE tasks remain tractable. For instance, MNLI (393k samples) takes $\approx 6060$ seconds ($\sim1.7$ hours), which is a one-time cost. Smaller tasks like QNLI (105k) finish in $\sim12$ minutes. These results underscore that SALMAN is viable for mainstream NLP benchmarks. For extremely large datasets, approximate or distributed strategies can be employed for further scalability.

\begin{table}[!h]
\centering
\caption{{SALMAN runtime across different GLUE tasks. Approx.\ sample counts and total runtime on typical hardware.}}
\label{tab:efficiency_runtimes}
\begin{tabular}{lcc}
\toprule
\textbf{Dataset} & \textbf{\#Samples} & \textbf{Runtime (sec)} \\
\midrule
SST-2   & $\sim67$k  & 642.4  \\
RTE     & $\sim2.5$k & 12.0   \\
QNLI    & $\sim105$k & 736.3  \\
MNLI    & $\sim393$k & 6060.2 \\
\bottomrule
\end{tabular}
\end{table}

Thus, while SALMAN does require a modest upfront cost to build the manifold and compute distortions, the resulting robustness ranking can be reused for downstream tasks (e.g., adversarial evaluation, fine-tuning). This amortizes the cost and keeps the approach practical for modern NLP pipelines.

\subsection{More Attack Experiment Results}
\label{appendix:attack}
 Figure~\ref{fig:decile_asr_plot} (left) shows that ASR is highest for the first decile (most non-robust) and consistently decreases as samples become more robust in higher deciles. This confirms that SALMAN ranking provides a reliable gradient for identifying vulnerable data points.

\begin{figure}[!h]
    \centering
    \includegraphics[width=0.48\linewidth]{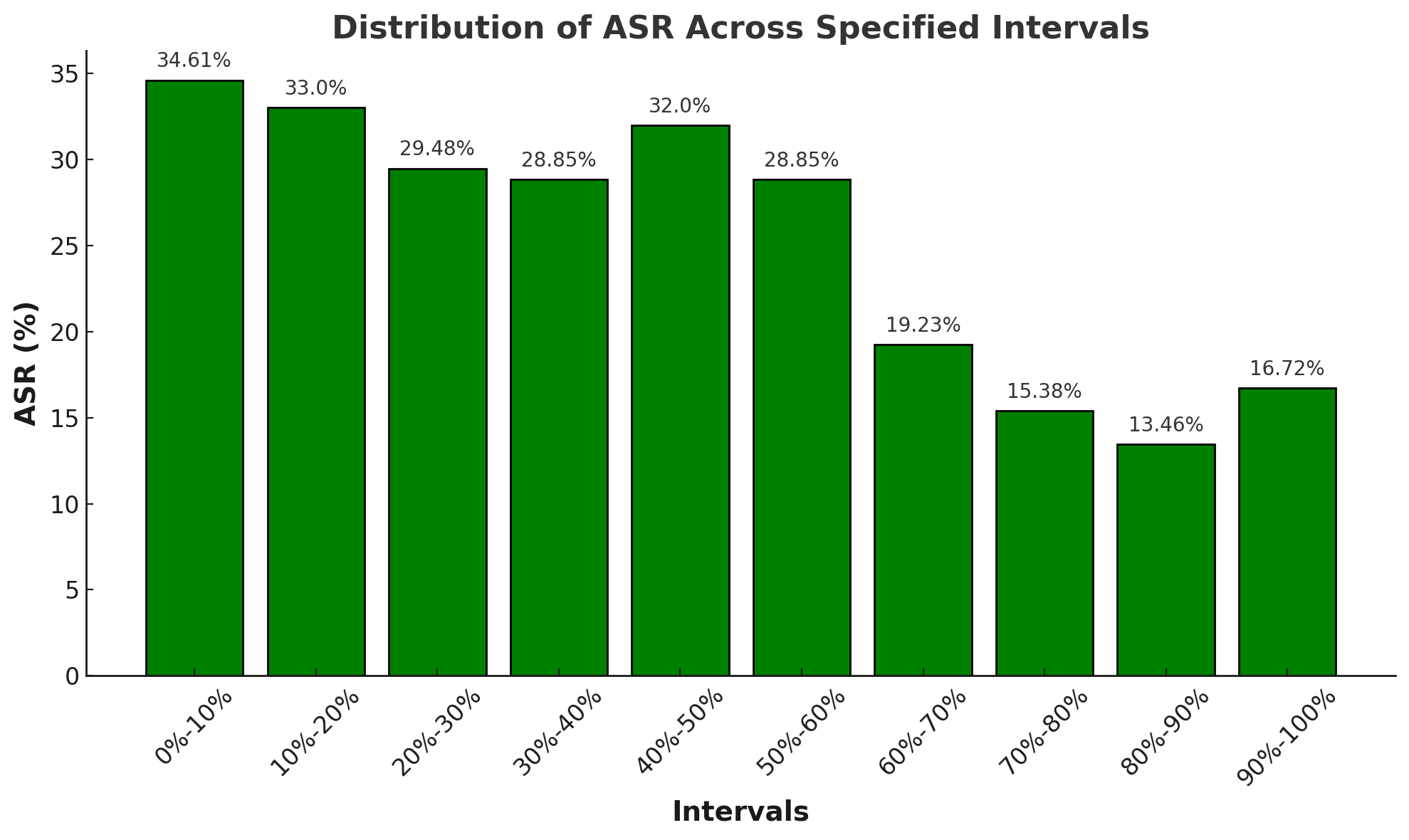}~~
    \includegraphics[width=0.48\linewidth]{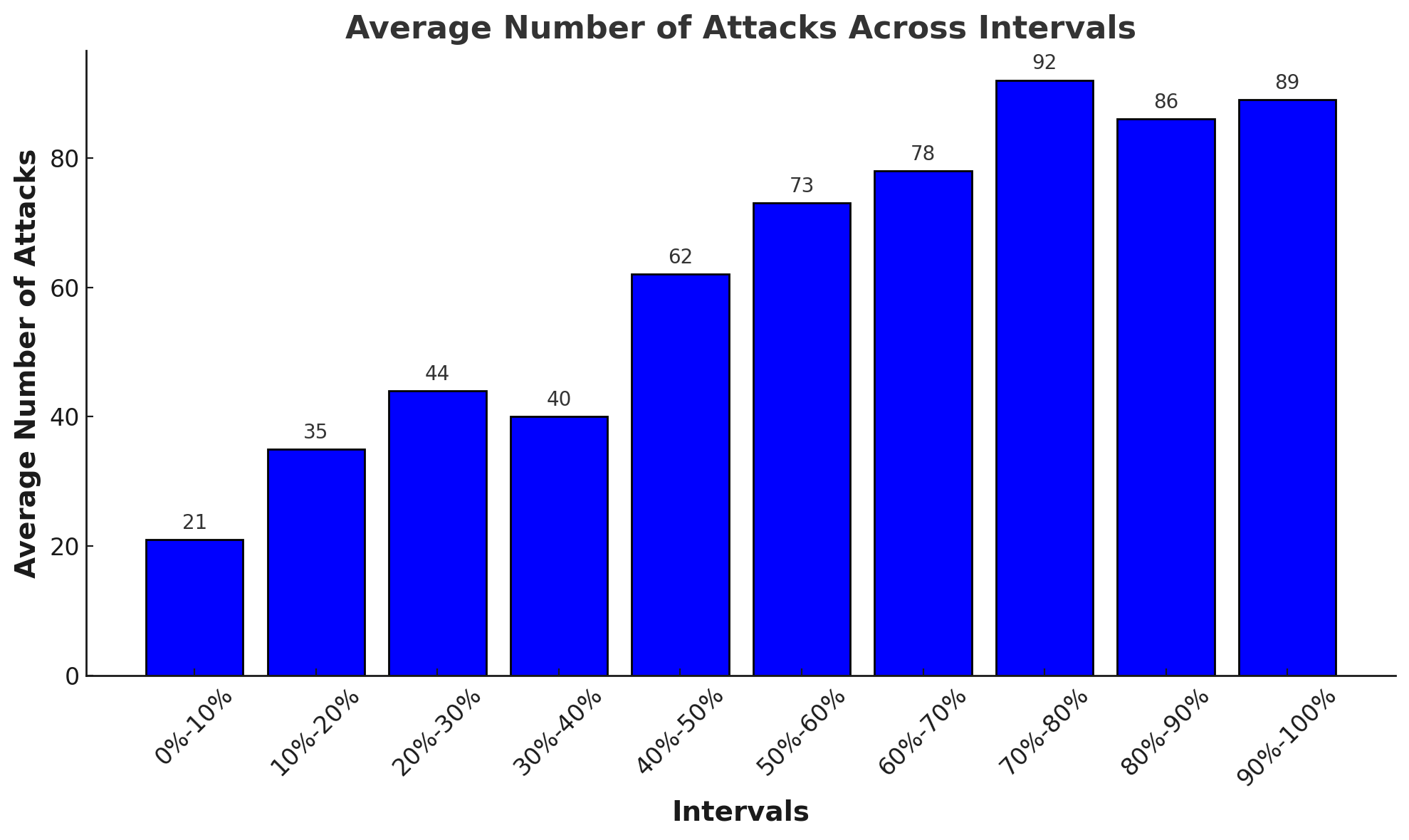}
    \caption{\emph{(Left)}  Attack Success Rate (ASR) across deciles of non-robustness. \emph{(Right)} Average number of AutoDAN steps needed for successful attack on robust vs.\ non-robust subsets. Non-robust samples require fewer steps, highlighting their vulnerability.}
    \label{fig:decile_asr_plot}
\end{figure}

% \subsubsection{Measuring Attack Steps}
We further follow GCG~\cite{zou2023universal} and AutoDAN~\cite{liu2023autodan} to measure the average number of attack steps required. By default, GCG uses a fixed 250 steps for each trial, but we adapt the AutoDAN approach to run up to 100 steps. Figure~\ref{fig:decile_asr_plot} (right) shows that non-robust samples require \emph{significantly fewer} steps for successful attack, whereas robust samples demand more queries to break. This corroborates our SALMAN-based ranking.

% \subsubsection{Proxy-Based SALMAN Ranking}
One may wonder if SALMAN must be computed on the exact same model we later attack. We investigate using GPT-2, LLaMA2-7B, or LLaMA3-8B embeddings as a “proxy” for SALMAN ranking, then testing the transferability of the attack to the target LLM. Table~\ref{tab:proxy_salman} shows that the Attack Success Rate (ASR) remains quite similar across each proxy’s ranking, suggesting that SALMAN is fairly robust to model variations.

\end{document}